\newtheorem{theorem}{Theorem}
\newtheorem{definition}{Definition}
\newtheorem{lemma}{Lemma}
\newcolumntype{Y}{>{\centering\arraybackslash}X}
\def\eg{\textit{e.g.}}
\def\ie{\textit{i.e.}}
\def\etal{\textit{et al.~}}
\def\0{{\mathbf 0}}
\def\1{{\mathbf 1}}
\def\f{{\mathbf f}}
\def\u{{\mathbf u}}
\def\v{{\mathbf v}}
\def\z{{\mathbf z}}
\def\A{{\mathbf A}}
\def\B{{\mathbf B}}
\def\D{\mathbf{D}}
\def\I{{\mathbf I}}
\def\L{{\mathbf L}}
\def\M{{\mathbf M}}
\def\S{{\mathbf S}}
\def\V{{\mathbf V}}
\def\W{{\mathbf W}}
\def\bLambda{{\boldsymbol \Lambda}}
\def\cE{{\mathcal E}}
\def\cG{{\mathcal G}}
\def\cN{{\mathcal N}}
\def\cS{{\mathcal S}}
\def\cU{{\mathcal U}}
\begin{document}

\title{Signed Graph Metric Learning via \\ Gershgorin Disc Perfect Alignment}

\author{Cheng Yang,~\IEEEmembership{Member,~IEEE},
Gene Cheung,~\IEEEmembership{Fellow,~IEEE},
Wei Hu,~\IEEEmembership{Senior Member,~IEEE}

\begin{small}
\thanks{Cheng Yang is with Shanghai Jiao Tong University, Shanghai 200240, China (e-mail: cheng.yang@ieee.org).}
\thanks{Gene Cheung is with York University, Toronto M3J 1P3, ON, Canada (e-mail: genec@yorku.ca).}
\thanks{Wei Hu is with Peking University, Beijing 100871, China (e-mail: forhuwei@pku.edu.cn).}
\thanks{Corresponding authors: Gene Cheung and Wei Hu.}
\thanks{Cheng Yang acknowledges the support of the China Postdoctoral Science Foundation grant No.2020TQ0194.}
\thanks{Gene Cheung acknowledges the support of the NSERC grants RGPIN-2019-06271, RGPAS-2019-00110.}
\thanks{Wei Hu acknowledges the support of National Natural Science Foundation of China (61972009) and Beijing Natural Science Foundation (4194080).}
\thanks{This work was done when Cheng Yang was with York University, Toronto M3J 1P3, ON, Canada.}
\end{small}
}

\IEEEtitleabstractindextext{%
\begin{abstract}
Given a convex and differentiable objective $Q(\M)$ for a real symmetric matrix $\M$ in the positive definite (PD) cone---used to compute Mahalanobis distances---we propose a fast general metric learning framework that is entirely projection-free. 
We first assume that $\M$ resides in a space $\cS$ of generalized graph Laplacian matrices corresponding to balanced signed graphs.
$\M \in \cS$ that is also PD is called a graph metric matrix.
Unlike low-rank metric matrices common in the literature, $\cS$ includes the important diagonal-only matrices as a special case. 
The key theorem to circumvent full eigen-decomposition and enable fast metric matrix optimization is Gershgorin disc perfect alignment (GDPA): 
given $\M \in \cS$ and diagonal matrix $\S$, where $S_{ii} = 1/v_i$ and $\v$ is the first eigenvector of $\M$, we prove that Gershgorin disc left-ends of similarity transform $\B = \S \M \S^{-1}$ are perfectly aligned at the smallest eigenvalue $\lambda_{\min}$. 
Using this theorem, we replace the PD cone constraint in the metric learning problem with tightest possible linear constraints per iteration, so that the alternating optimization of the diagonal / off-diagonal terms in $\M$ can be solved efficiently as linear programs via the Frank-Wolfe method.
We update $\v$ using Locally Optimal Block Preconditioned Conjugate Gradient (LOBPCG) with warm start as entries in $\M$ are optimized successively.
Experiments show that our graph metric optimization is significantly faster than cone-projection schemes, and produces competitive binary classification performance. 
\end{abstract}

\begin{IEEEkeywords}
Graph signal processing, metric learning, Gershgorin circle theorem, convex optimization
\end{IEEEkeywords}}

\maketitle
\thispagestyle{empty}

\IEEEpeerreviewmaketitle

\section{Introduction}
\label{sec:intro}
\IEEEPARstart{T}{he} notion of \textit{feature distance} $\delta_{ij}$ between two data samples $i$ and $j$, associated with respective feature vectors, $\f_i, \f_j \in \mathbb{R}^K$, is vital for many machine learning applications such as classification \cite{weinNNberger09LMNN}. 
Feature distance is traditionally computed as the \textit{Mahalanobis distance} \cite{mahalanobis1936},
$\delta_{ij} = (\mathbf{f}_i - \mathbf{f}_j)^{\top} \mathbf{M} (\mathbf{f}_i - \mathbf{f}_j)$, where $\M \in \mathbb{R}^{K \times K}$ is a \textit{metric matrix} assumed to be positive definite (PD)\footnote{Recent metric learning methods alternatively assumed $\M$ to be positive semi-definite (PSD), \ie, $\M \succeq 0$. Our methodology can handle both cases by appropriately setting a parameter $\rho \geq 0$ to be discussed.}, \ie, $\M \succ 0$ \cite{fsp2014}.
How to determine the best $\M$ given an objective function $Q(\M)$---\ie, $\min_{\M \succ 0} Q(\M)$---is the \textit{metric learning} problem.
We study this basic optimization problem in this paper. 

There is extensive prior work on the \textit{modeling} \cite{schultz04nips,globerson2006metric,qi09icml,zadeh16GMML}, \textit{optimization} \cite{Parikh31,GoluVanl96,yang20,hu2020feature}, and \textit{joint modeling / optimization} \cite{weinNNberger09LMNN, LSML, ericdml, lim13icml,liu15aaai,mu16aaai} of metric learning.
Modeling means new proposed definitions of objective $Q(\M)$, and optimization means new algorithms that solve $\min_{\M \succ 0} Q(\M)$ given $Q(\M)$.
A fundamental challenge in optimization of metric learning is to satisfy the PD cone constraint $\M \succ 0$ when minimizing a convex objective $Q(\M)$ in an efficient manner.
One na\"{i}ve approach is to first decompose $\M$ into $\M= \mathbf{G} \mathbf{G}^{\top}$ via \textit{Cholesky factorization} \cite{GoluVanl96}, where $\mathbf{G}$ is a lower-triangular matrix, and optimize $Q'(\mathbf{G}) = Q(\mathbf{G} \mathbf{G}^{\top})$ directly. 
However, doing so may mean a non-convex objective $Q'(\mathbf{G})$ with respect to variable $\mathbf{G}$, resulting in bad local minimums during non-convex optimization. 

Instead, one conventional and popular approach is alternating gradient-descent / projection (\eg, \textit{proximal gradient} (PG) \cite{Parikh31}), where a descent step $\alpha$ from current solution $\M^t$ at iteration $t$ in the direction of negative gradient $-\nabla Q(\M^t)$ is followed by a projection $\text{Proj}()$ back to the PD cone, \ie, $\M^{t+1} := \text{Proj} \left( \M^t - \alpha \nabla Q(\M^t) \right)$.
However, projection $\text{Proj}()$ requires eigen-decomposition of $\M^t$ and hard-thresholding of its eigenvalues per iteration, which has complexity $\mathcal{O}(K^3)$ and thus is expensive.

To avoid eigen-decomposition, recent methods consider alternative search spaces of matrices such as sparse or low-rank matrices to ease optimization \cite{qi09icml,lim13icml,liu15aaai,mu16aaai,zhang17aaai}. 
While efficient, the assumed search spaces are often overly restricted and degrade the quality of sought metric matrix $\M$. 
For example, low-rank methods assume reducibility of the $K$ available features to a lower dimension, and hence exclude the simple yet important weighted feature metric case where $\M$ is diagonal \cite{yang2018apsipa}, \ie, $(\mathbf{f}_i - \mathbf{f}_j)^{\top} \mathbf{M} (\mathbf{f}_i - \mathbf{f}_j) = \sum_{k} M_{kk} (f_i^k - f_j^k)^2$, $M_{kk} > 0, \forall k$. 

In this paper, we propose a fast, general metric learning framework, capable of optimizing any convex and differentiable objective $Q(\M)$, that entirely circumvents eigen-decomposition-based projection on the PD cone.
Compared to low-rank methods \cite{mu16aaai,liu15aaai}, our framework is more inclusive and includes diagonal metric matrices as a special case. 
Specifically, we first define a search space $\cS$ of generalized graph Laplacian matrices \cite{biyikoglu2005nodal}, each corresponding to a balanced\footnote{Balance for a feature graph means that if a feature $i$ is positively correlated with feature $j$, then feature $k$ positively correlated with $i$ cannot be negatively correlated with $j$. See Section\;\ref{subsec:GDA_Signed} for details.} signed graph.
If in addition $\M \succ 0$, then $\M$ is a \textit{graph metric matrix}.
In essence, an underlying graph $\cG$ corresponding to $\M \in \cS$ contains: i) edge weights reflecting pairwise (anti-)correlations among the $K$ features, and ii) self-loops designating relative importance among the features. 
Our proposed optimization enables fast searches within space $\cS$.

Our theoretical foundation is a new linear algebraic theorem called \textit{Gershgorin disc perfect alignment} (GDPA): for any matrix $\M \in \cS$, Gershgorin disc left-ends of similarity transform $\B = \S \M \S^{-1}$, where $\S$ is a diagonal matrix with $S_{ii} = 1 / v_i$ and $\v$ is the first eigenvector of $\M$, can be perfectly aligned at the smallest eigenvalue $\lambda_{\min}$.
Leveraging GDPA for fast metric optimization, we replace the PD cone constraint with a set of $K$ \textit{tightest possible}\footnote{By ``tightest possible", we mean that the lower bound $\lambda^-_{\min}(\B)$ of the smallest eigenvalue $\lambda_{\min}(\B)$---smallest Gershgorin disc left-end of matrix $\B$---and $\lambda_{\min}(\B)$ are the same. See Section\;\ref{sec:graph} for details.} linear constraints per iteration as follows:  
i) compute scalars $S_{ii} = 1/v_i$ from first eigenvector $\v$ of previous solution $\M^{t}$, ii) write $K$ linear constraints for $K$ rows of the next solution $\M^{t+1}$ using computed scalars $S_{ii}$ to ensure PDness of $\M^{t+1}$ via the \textit{Gershgorin Circle Theorem} (GCT) \cite{gahc}.
Linear constraints mean that our proposed alternating optimization of the diagonal / off-diagonal terms in $\M^{t+1}$ can be solved speedily as \textit{linear programs} (LP) \cite{co1998} via the Frank-Wolfe method \cite{pmlr-v28-jaggi13}. 
A flow chart of our GDPA-based optimization framework is shown in Fig.\;\ref{fig:algorithm_chart}, where in each minimization $\min Q(\M)$, the PD cone constraint is replaced by linear constraints defined using scalar $\{S_{ii}\}$, resulting in significant speedup. 

\begin{figure}[t]
    \centering
    \vspace{-0.15in}
    \includegraphics[width=0.48\textwidth]{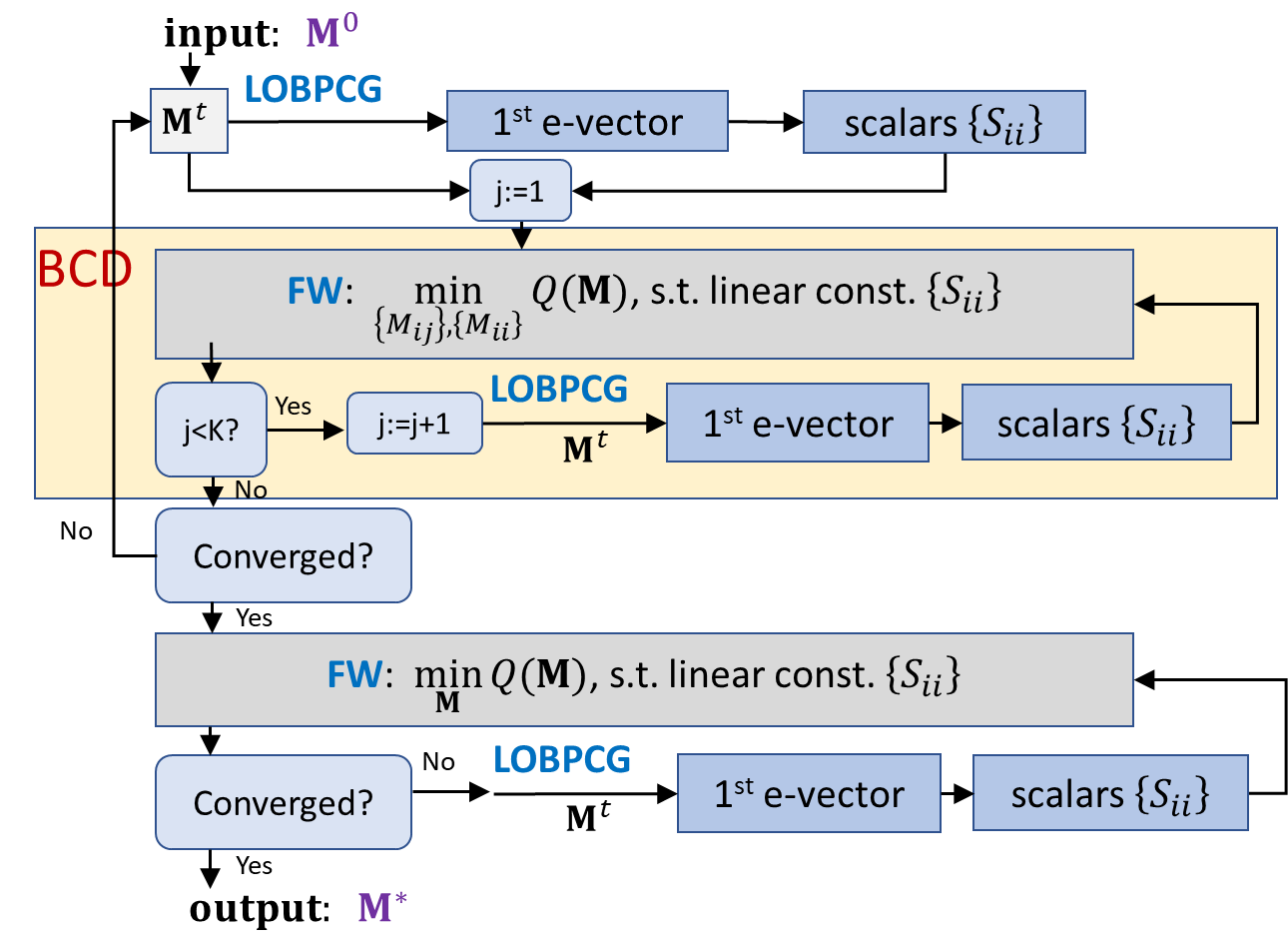}
    \vspace{-0.1in}
    \caption{GDPA-based optimization framework. BCD=Block coordinate descent. FW=Frank-Wolfe. LOBPCG=Locally Optimal Block Preconditioned Conjugate Gradient.}
    \label{fig:algorithm_chart}
\end{figure}

The bulk of the algorithm complexity resides in the repeated computation of the first eigenvectors $\v$ of $\M^{t}$.
We update $\v$ using \textit{Locally Optimal Block Preconditioned Conjugate Gradient} (LOBPCG) \cite{Knyazev01} with warm start as diagonal / off-diagonal terms are optimized successively.
Extensive experiments show that our graph metric optimization is significantly faster than cone-projection methods (up to 8x speedup for large $K$), and produces competitive binary classification performance.

The paper is organized as follows. 
We first review previous works in Section~\ref{sec:related}. 
We describe GDPA in Section~\ref{sec:graph}. 
Leveraging GDPA, we describe our metric optimization framework in Section~\ref{sec:learning} and \ref{sec:learning2}. 
Finally, experiments and conclusions are presented in Section~\ref{sec:results} and \ref{sec:conclude}, respectively.

\section{Related Work}
\label{sec:related}
We first divide existing methods into two main categories: linear and nonlinear distance metric learning. 

\subsection{Linear Distance Metric Learning}

These methods learn linear transformations to project samples into a new feature space. This paradigm is prevalent in the metric learning community, as many of the resulting transformations are tractable.  
Mahalanobis distance metric is one representative linear metric, which has been extensively studied under different assumptions. 
As illustrated in Fig.~\ref{fig:related}, we classify previous works in linear distance metric learning into three categories based on their key contributions: 
1) contributions in modeling; 2) contributions in optimization; and 3) contributions in joint modeling and optimization.  

The first class of related works focused on the design of novel metric learning objectives (modeling) while employing existing techniques and algorithms for optimization \cite{schultz04nips,globerson2006metric,qi09icml,zadeh16GMML}. 
Zadeh \etal \cite{zadeh16GMML} proposed a metric learning objective following intuitive geometric reasoning, resulting in an unconstrained, smooth, and strictly convex optimization problem that admits a closed-form solution.  
Globerson \etal \cite{globerson2006metric} proposed a convex optimization problem aiming to collapse all examples in the same class to a single point and push examples in other classes infinitely far away, and employed the projected gradient method to solve it. 
Qi \etal \cite{qi09icml} exploited the sparsity prior of distance metric learning, which is solved in a block coordinate descent fashion. 

\begin{figure}[t]
\centering
\includegraphics[width=0.45\textwidth]{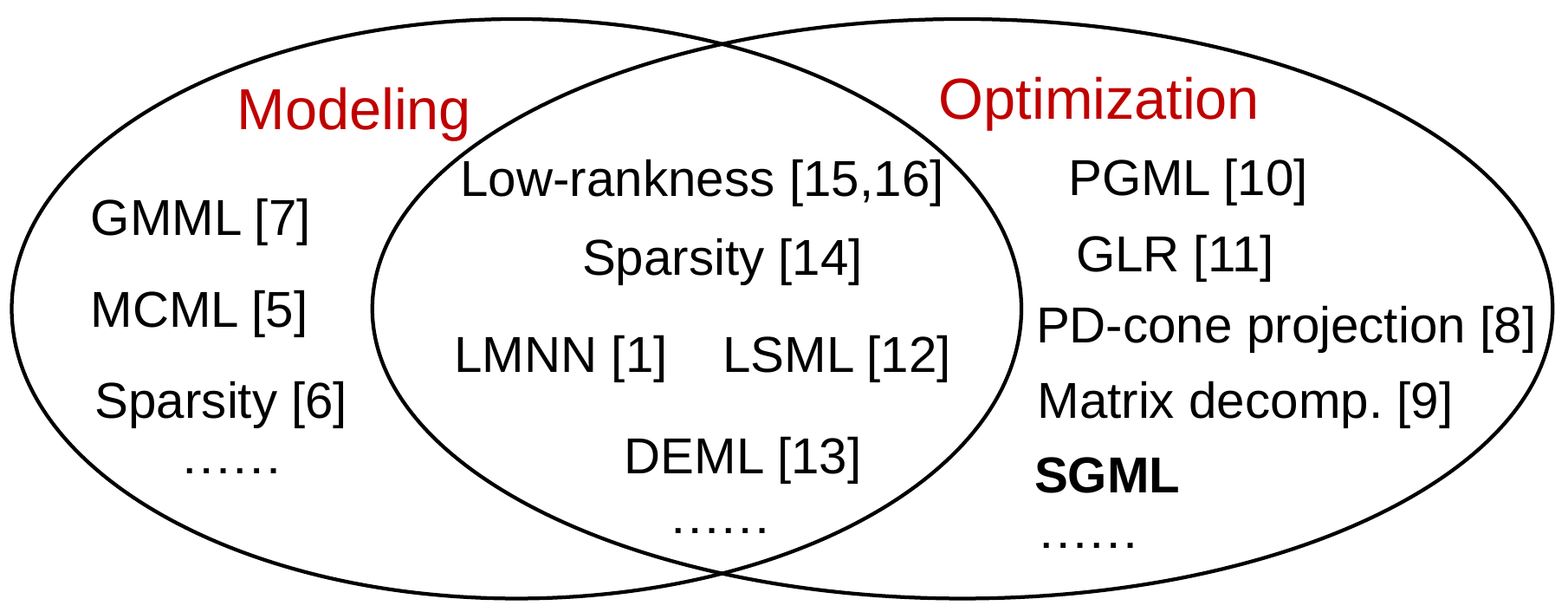}
\vspace{-0.12in}
\caption{Existing linear distance metric learning methods are classified based on contributions in modeling, optimization or joint modeling / optimization.}
\label{fig:related}
\end{figure}

The second class of works focused on new optimization methods for existing metric learning models. 
Classical optimization methods include PD-cone projection \cite{Parikh31} and matrix decomposition \cite{GoluVanl96}.
Further, Yang \etal \cite{yang20} proposed a projection-free metric learning framework based on positive graphs for a convex differentiable objective function of the metric matrix. 
Hu \etal \cite{hu2020feature} proposed a fast optimization method that mitigates
full eigen-decomposition of the distance metric specifically to minimize the \textit{graph Laplacian regularizer} (GLR) \cite{pang2017graph}.    
Our GDPA-based optimization framework belongs to this second class---we are agnostic to the choice of actual metric objective employed. 
Instead, we argue that \textit{our general optimization strategy can speedily minimize a broad class of metric objectives, requiring only that the chosen cost function $Q(\M)$ is convex and differentiable}.

Most related works fall into the third class that proposed new metric learning objectives accompanied by specialized optimization \cite{weinNNberger09LMNN, LSML, ericdml, lim13icml,liu15aaai,mu16aaai}. 
Many new models are based on assumptions of the solution space, such as low-rankness \cite{liu15aaai,mu16aaai} and sparsity \cite{lim13icml}.
For example, assuming that the desired metric $\M$ inherently lies in a lower dimension than the original $K$-dimensional feature space, Weinberger \etal \cite{weinNNberger09LMNN} proposed a convex model to maximize the margin by which the model correctly classifies labeled examples in the training set, and implemented a special-purpose solver for speedup. 
Liu \etal \cite{LSML} proposed to learn a Mahalanobis distance metric from supervision in the form of relative distance comparisons, along with a simple yet effective algorithm. 
Xing \etal \cite{ericdml} learned a distance metric that respects the relationships of given examples of similar and dissimilar pairs of points, and provided efficient, local-optima-free algorithms. 
In contrast, our framework makes no strong assumptions such as low-rankness or sparsity on the optimization variable $\M$ beyond its residence in the PD cone, and thus is more general.

\subsection{Nonlinear Distance Metric Learning}

Given possibly nonlinear relationship of data points, these methods learned nonlinear transformations to map samples into another feature space. 
While kernelized linear transformations can be adopted to address the nonlinear problem \cite{torresani2007large,mika1999fisher}, choosing a kernel is typically difficult and empirical, and often not flexible enough to capture the nonlinearity in the data. 
Given that deep learning is effective in modeling function nonlinearities, deep metric learning (DML) methods employed various deep neural network architectures to learn a set of hierarchical transformations for nonlinear mapping of data points \cite{lu2017deep}.
DML methods mainly include Siamese-networks based methods \cite{hadsell2006dimensionality,taigman2014deepface,hu2014discriminative,sun2014deep} and triplet-networks based methods \cite{wang2014learning, hoffer2015deep, schroff2015facenet,oh2016deep}. 
The objective functions were often designed for different specific tasks. 
DML has shown substantial benefits in wide applications of various visual understanding tasks such as image classification, visual search, visual tracking and so on. 
However, they were mostly trained in a supervised fashion, requiring a large amount of labeled data. 

In contrast, our metric optimization framework does not require bulk training data, and is suitable for any convex and differentiable objective, including proposals in \cite{globerson2006metric,weinNNberger09LMNN,ying2012distance,zadeh16GMML}.

\subsection{GDA-based Graph Sampling}

We studied \textit{Gershgorin disc alignment} (GDA) in the context of graph sampling in our previous work
\cite{bai19icassp,bai20tsp}. 
There are two key differences between our current work and \cite{bai19icassp,bai20tsp}.
First, we derive theorems to show \textit{perfect} alignment of Gershgorin disc left-ends at $\lambda_{\min}$ for defined classes of matrices, while disc alignment in \cite{bai19icassp,bai20tsp} was only approximate. 
Second, we apply disc alignment theory in an optimization framework for metric learning.
To differentiate from \cite{bai19icassp,bai20tsp}, we call our current work \textit{Gershgorin disc perfect alignment} (GDPA).

Our preliminary work on GDPA for metric learning \cite{yang20} assumes a restricted search space of Laplacian matrices for irreducible \textit{positive} graphs with positive node degrees.
Here, we generalize to a much larger space of Laplacian matrices for \textit{balanced signed} graphs.
While positive edges can encode positive correlations between features, negative edges in a signed graph can encode anti-correlations between features. 

We illustrate the usefulness of this generalization in metric learning using a 2-dimensional feature space, where the feature graph has (possibly negative) edge weight $w_{1,2}$ connecting features (nodes) 1 and 2, and there are self-loops $u_1 = u_2 = 2 |w_{1,2}|$ at the two nodes.
The resulting generalized graph Laplacian matrix\footnote{We discuss definitions of graph Laplacian matrices in Section\;\ref{subsec:laplacian}.} $\M$ is
\begin{align}
\M = \left[ \begin{array}{cc}
2 |w_{1,2}| + w_{1,2} & -w_{1,2} \\
-w_{1,2} & 2 |w_{1,2}| + w_{1,2}
\end{array} \right] .
\end{align}
This self-loop assignment ensures $\M$ is PSD regardless of the value of $w_{1,2}$ \cite{8296568}.
To achieve zero Mahalanobis distance between samples $i$ and $j$, clearly one possibility is when $\f_i = \f_j \in \mathbb{R}^2$, in which case $\delta_{ij}(\M) = (\f_i - \f_j)^{\top} \M (\f_i - \f_j) = 0$. If $w_{1,2} < 0$, there exists another possibility when $\f_i - \f_j = [\eta ~ -\eta]^{\top}$ for some $\eta \in \mathbb{R}$, in which case, 
\begin{align}
(\f_i - \f_j)^{\top} \M (\f_i - \f_j) &= \left[\eta ~-\eta \right]
\left[ \begin{array}{cc}
- w_{1,2} & - w_{1,2} \\
- w_{1,2} & - w_{1,2}
\end{array} \right] 
\left[ \begin{array}{c}
\eta \\
-\eta 
\end{array} \right] 
\nonumber \\
&= 0 .
\end{align}
Thus, a negative edge can encode anti-correlation in features and enable small Mahalanobis distance even when $\f_i \neq \f_j$. 
We will show in Section\;\ref{sec:results} that this generalization leads to noticeable performance gain when optimizing different objectives $Q(\M)$'s.

\section{Gershgorin Disc Perfect Alignment}
\label{sec:graph}
We first review basic definitions in \textit{graph signal processing} (GSP) \cite{8347162,8334407,ortega_2021,cheung_2021} that are necessary to understand our GDPA theory. 
We then describe GDPA for positive graphs and balanced signed graphs in order.

\subsection{Graph and Graph Laplacian Matrices}
\label{subsec:laplacian}

We consider an undirected graph $ \mathcal{G}=\{\cN,\cE, \cU\} $ containing a node set $\cN$ of cardinality $|\cN|=K$. 
Each inter-node edge $(i,j) \in \mathcal{E}$, $i\neq j$, has an associated weight $w_{ij} \in \mathbb{R}$ that reflects the degree of (dis)similarity or (anti-)correlation between nodes $i$ and $j$, depending on the sign of $w_{ij}$.
Each node $i$ may have a self-loop $(i) \in \cU$ with weight $u_i \in \mathbb{R}$. 

One can collect edge weights and self-loops into an \textit{adjacency matrix} $\W$, where $W_{ij} = w_{ij}, ~(i,j) \in \cE$, and $W_{ii} = u_i,~ (i) \in \cU$. 
We define a diagonal \textit{degree matrix} $\D$, where $D_{ii} = \sum_j w_{ij}$, that accounts for both inter-node edges and self-loops. 
The \textit{combinatorial graph Laplacian matrix} \cite{8347162} is defined as $ \L=\D-\W $. 
A \textit{generalized graph Laplacian matrix} \cite{biyikoglu2005nodal} accounts for self-loops in $\cG$ also and is defined as $\L_g = \D - \W + \text{diag}(\W) $, where $\text{diag}(\W)$ extracts the diagonal entries of $\W$.
Alternatively, we can write $\L_g = \D - \W_g$, where $\W_g = \W - \text{diag}(\W)$ contains only inter-node edge weights (diagonal terms are zeros).


\subsection{GDPA for Positive Graphs}

Consider first the simpler case of a \textit{positive graph}, where an irreducible\footnote{An irreducible graph $\cG$ means that there exists a path from any node in $\cG$ to any other node in $\cG$ \cite{irregraph}.} graph $\cG$ (no disconnected sub-graphs) has strictly positive edge weights and self-loops, \ie, $w_{ij} > 0, (i,j) \in \cE$ and $u_i > 0, (i) \in \cU$.
This means that $\W$ is non-negative, the diagonals in $\D$ are strictly positive, and the generalized graph Laplacian $\L_g$ is \textit{positive semi-deinite} (PSD) \cite{cheung2018graph}.
We discuss first GDPA for this case.

\subsubsection{Gershgorin Circle Theorem}
\label{subsubsec:GCT}

We first overview \textit{Gershgorin Circle Theorem} (GCT) \cite{gahc}. 
By GCT, each real eigenvalue $\lambda$ of a real symmetric matrix $\M$ resides in at least one \textit{Gershgorin disc} $\Psi_i$, corresponding to row $i$ of $\M$, with center $c_i = M_{ii}$ and radius $r_i = \sum_{j \,|\, j\neq i} |M_{ij}|$, \ie,
\begin{align}
\exists i ~~\mbox{s.t.}~~
c_i - r_i \leq \lambda \leq c_i + r_i.
\end{align}
Thus a sufficient (but not necessary) condition to guarantee $\M$ is PSD (\ie, smallest eigenvalue $\lambda_{\min} \geq 0$) is to ensure that the smallest Gershgorin disc left-end $\lambda_{\min}^-$---a lower bound for $\lambda_{\min}$---is non-negative, \ie,
\begin{align}
0 \leq \lambda_{\min}^- \triangleq \min_i c_i - r_i \leq \lambda_{\min} .
\end{align}

However, $\lambda_{\min}^-$ is often much smaller $\lambda_{\min}$, resulting in a loose lower bound.
As an illustration, consider the following example $3 \times 3$ PD matrix $\M$:
\begin{align}
\M = \left[ \begin{array}{ccc}
2 & -2 & -1 \\
-2 & 5 & -2 \\
-1 & -2 & 4
\end{array} \right] .
\label{eq:exM1}
\end{align}
Lower bound $\lambda_{\min}^- = \min (-1, 1, 1) = -1$, while the smallest eigenvalue for $\M$ is $\lambda_{\min} = 0.1078 > 0$.
See Fig.\;\ref{fig:gda_show} for an illustration of Gershgorin discs for this example.

\begin{figure}
\begin{center}
\includegraphics[width=3.5in]{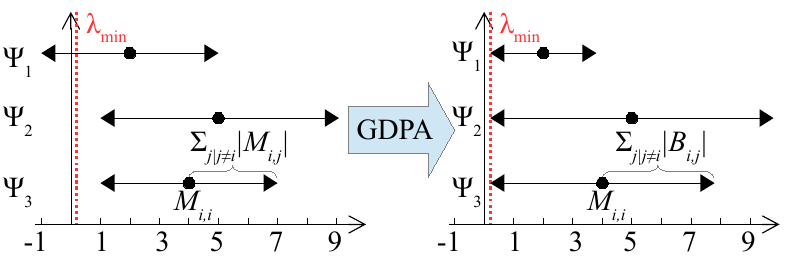}
\vspace{-0.2in}
\caption{Illustration of Gershgorin discs for matrix $\M$ in \eqref{eq:exM1} (left), and aligned discs for $\B = \S \M \S^{-1}$ (right).}
\label{fig:gda_show}
\end{center}
\end{figure}

\subsubsection{GDPA Analysis for Positive Graphs}

GDPA is a procedure to \textit{scale} the Gershgorin disc radii $r_i$ of matrix $\M$, so that all disc left-ends $c_i - r_i$ are perfectly aligned at $\lambda_{\min}(\M)$.
In other words, after GDPA the GCT lower bound $\lambda^-_{\min}(\M)$ for smallest eigenvalue $\lambda_{\min}(\M)$ is the \textit{tightest possible}.
Specifically, we perform a \textit{similarity transform} \cite{GoluVanl96} of $\M$ via matrix $\S$, \ie,
\begin{align}
\B = \S \M \S^{-1}
\label{eq:similarTrans}
\end{align}
where $\S = \text{diag}(s_1, \ldots, s_K)$ is chosen to be a diagonal \textit{scaling} matrix with scalars $s_1, \ldots, s_K$ along its diagonal, where $s_i > 0, \,\forall i$.
This means $\S$ is easily invertible and $\S^{-1}$ is well defined. 
$\B$ has the same eigenvalues as $\M$, and thus the smallest Gershgorin disc left-end for $\B$ is also a lower bound for $\M$'s smallest eigenvalue, \ie,

\begin{align}
\lambda_{\min}^-(\B) & \leq \lambda_{\min}(\B) = \lambda_{\min}(\M) \\
&= \min_i B_{ii} - \sum_{j \,|\, j\neq i} |B_{ij}| \\
&= \min_i M_{ii} - s_i \sum_{j \,|\, j \neq i} |M_{ij}| / s_j. 
\end{align}

We show that given a generalized graph Laplacian matrix $\M$ corresponding to an irreducible, positive graph $\cG$, there exist scalars $s_1, \ldots, s_K$ such that all Gershgorin disc left-ends of $\B = \S \M \S^{-1}$ are aligned exactly at $\lambda_{\min}(\M)$. 
We state this formally as a theorem.

\begin{theorem}
Let $\M$ be a generalized graph Laplacian matrix corresponding to an irreducible, positive graph $\cG$. 
Denote by $\v$ the first eigenvector of $\M$ corresponding to the smallest eigenvalue $\lambda_{\min}$.
Then, by computing scalars $s_i = 1/v_i, \forall i$, all Gershgorin disc left-ends of $\B = \S \M \S^{-1}$, $\S = \text{diag}(s_1, \ldots, s_K)$, are aligned at $\lambda_{\min}$, \ie, $B_{ii} - \sum_{j \,|\, j\neq i} |B_{ij}| = \lambda_{\min}, \forall i$.
\label{thm:GDA}
\end{theorem}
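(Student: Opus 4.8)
The plan is to reduce everything to two structural facts about $\M$ followed by a one-line substitution of the eigenvector equation. First I would pin down the sign of $\v$, which is where the hypotheses that $\cG$ is irreducible and positive are used. Since $\M$ is a generalized graph Laplacian of a positive graph, its off-diagonal entries satisfy $M_{ij} = -w_{ij} \leq 0$ for $i \neq j$. Pick a scalar $c \geq \max_i M_{ii}$ and set $\mathbf{P} = c\I - \M$; then $\mathbf{P}$ is entrywise non-negative, with off-diagonals $-M_{ij} = w_{ij} \geq 0$ and diagonals $c - M_{ii} \geq 0$. Irreducibility of $\cG$ makes $\mathbf{P}$ an irreducible non-negative matrix, so by the Perron-Frobenius theorem its dominant eigenvalue admits an eigenvector that is unique up to scaling and strictly positive. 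Because the eigenvalues of $\mathbf{P}$ are $c - \lambda$ as $\lambda$ ranges over those of $\M$, the dominant eigenvalue of $\mathbf{P}$ corresponds to $\lambda_{\min}(\M)$, and the Perron eigenvector is exactly the first eigenvector $\v$. Fixing its sign so that $v_i > 0$ for all $i$ then makes $s_i = 1/v_i$ well-defined and positive, so $\S$ and $\S^{-1}$ both exist.

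With $\v > 0$ in hand, the remaining computation is routine. The similarity transform yields $B_{ii} = M_{ii}$ and, for $i \neq j$, $B_{ij} = (s_i / s_j) M_{ij} = (v_j / v_i) M_{ij}$. Because $v_i, v_j > 0$ while $M_{ij} \leq 0$, each absolute value resolves without case analysis as $|B_{ij}| = -(v_j/v_i) M_{ij}$. The left-end of Gershgorin disc $i$ is therefore
\begin{align}
B_{ii} - \sum_{j \,|\, j \neq i} |B_{ij}| = M_{ii} + \frac{1}{v_i} \sum_{j \,|\, j \neq i} v_j M_{ij}.
\end{align}
Reading off the $i$-th row of the eigenvector equation $\M \v = \lambda_{\min} \v$ gives $\sum_{j \,|\, j \neq i} M_{ij} v_j = (\lambda_{\min} - M_{ii}) v_i$; substituting this, the $M_{ii}$ terms cancel and the left-end collapses to exactly $\lambda_{\min}$ for every $i$, which is the claim.

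The main obstacle is the positivity argument rather than the algebra: one must verify that the eigenvalue of $\M$ attained by the Perron eigenvector of $\mathbf{P}$ is genuinely the smallest one and not merely some eigenvalue, and that irreducibility forbids zero entries in $\v$ (so that $s_i = 1/v_i$ never blows up). Once $\v > 0$ is secured, the non-positive off-diagonal sign pattern of $\M$ is precisely what prevents any absolute value from changing sign, so the discs align with no further conditions. I would also note that the derivation never invokes PSD-ness of $\M$ beyond giving meaning to $\lambda_{\min}$, which suggests the same template will extend to the balanced signed-graph case once an analogous positivity and sign-consistency statement for $\v$ is established.
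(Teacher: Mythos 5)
Your proof is correct, and the algebraic half (expanding $B_{ij} = (v_j/v_i)M_{ij}$, resolving the absolute values via $M_{ij}\le 0$ and $\v>0$, and substituting row $i$ of $\M\v=\lambda_{\min}\v$) is the same computation the paper performs, just phrased directly in $\M$ and $\v$ rather than via $\B\1=\lambda_{\min}\1$. Where you genuinely diverge is in the key lemma that $\v$ can be taken strictly positive. The paper builds the row-stochastic-like matrix $\A = \D^{-1}(\W_g + \lambda_{\min}\I)$, proves $\rho(\A)=1$ by a separate Rayleigh-quotient contradiction argument in the appendix, and then applies Perron--Frobenius to $\A$; this requires $\D \succ 0$ and PSD-ness of $\M$. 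You instead use the standard shift $\mathbf{P} = c\I - \M$ with $c \ge \max_i M_{ii}$, which is entrywise non-negative and irreducible with no invertibility or PSD hypotheses --- a cleaner and more portable route, as you note. The one step you flag but do not fully close is that $\rho(\mathbf{P}) = c - \lambda_{\min}$ rather than $|c-\lambda_{\max}|$; this is a one-line fill: since $\mathbf{P}$ is symmetric, Perron--Frobenius forces $\rho(\mathbf{P})$ to be a (real) eigenvalue of $\mathbf{P}$, and if $\lambda_{\max}-c$ exceeded $c-\lambda_{\min}$ it would have to equal $c-\lambda_j$ for some eigenvalue $\lambda_j < \lambda_{\min}$, a contradiction (alternatively, take $c$ to be the largest Gershgorin right-end of $\M$, so that all eigenvalues of $\mathbf{P}$ are non-negative). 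With that sentence added, your argument is complete and slightly more general than the paper's.
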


Continuing our earlier example, using $s_1 = 0.7511$, $ s_2 = 0.4886$ and $s_3=0.4440$, we see that  $\B = \S \M \S^{-1}$ for $\M$ in \eqref{eq:exM1} has all disc left-ends aligned at $\lambda_{\min} = 0.1078$.

\vspace{0.1in}
To prove Theorem 1, we first establish the following lemma. 

\begin{lemma}
\label{lemma:posEV}
There exists a first eigenvector $\v$ with strictly positive entries for a generalized graph Laplacian matrix $\M$ corresponding to an irreducible, positive graph $\cG$.
\end{lemma}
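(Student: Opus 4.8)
The plan is to reduce the claim to the Perron-Frobenius theorem via a spectral shift. First I would write $\M = \D - \W_g$, where $\W_g$ holds the inter-node edge weights and has zero diagonal; since $\cG$ is a positive graph, every off-diagonal entry $M_{ij} = -w_{ij}$ with $w_{ij} > 0$ is strictly negative, while $\M$ remains PSD by the assumptions of this subsection.

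Next I would introduce the shifted matrix $\A = c\,\I - \M$ for a constant $c$ chosen large enough that every diagonal entry $c - M_{ii}$ is positive (e.g. $c > \max_i M_{ii}$). Then $\A$ is entrywise non-negative: its off-diagonal entries equal $-M_{ij} = w_{ij} \geq 0$, and its diagonal entries are positive by the choice of $c$. Because $\cG$ is irreducible, the support of $\W_g$---and hence the off-diagonal sparsity pattern of $\A$---corresponds to a connected graph, so $\A$ is an irreducible non-negative matrix.

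By the Perron-Frobenius theorem, such an $\A$ has a simple largest eigenvalue (its Perron eigenvalue) whose eigenvector $\v$ can be taken to have strictly positive entries. The shift leaves eigenvectors unchanged and maps eigenvalues by $\lambda \mapsto c - \lambda$, so the largest eigenvalue of $\A$ equals $c - \lambda_{\min}(\M)$ and its Perron eigenvector is exactly a first eigenvector of $\M$. Fixing the sign so that all entries are positive then yields the desired strictly positive first eigenvector, and the simplicity of the Perron eigenvalue additionally shows that $\lambda_{\min}(\M)$ is itself simple.

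The main obstacle---really the only point requiring care---is verifying that the shifted matrix is simultaneously non-negative and irreducible: non-negativity hinges on the positive-graph assumption (so that $-M_{ij} = w_{ij} > 0$ off-diagonal) together with picking $c$ beyond the largest diagonal entry, while irreducibility must be inherited from graph connectivity rather than assumed of $\M$ directly. Once the shift is set up correctly, invoking Perron-Frobenius and translating its conclusion back through the map $\lambda \mapsto c - \lambda$ is routine.
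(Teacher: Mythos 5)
Your proof is correct, but it takes a genuinely different route from the paper's. You use the classical shift $\A = c\I - \M$ with $c$ large enough to make the diagonal non-negative, observe that $\A$ is an irreducible non-negative matrix because the off-diagonal entries are the edge weights $w_{ij} \geq 0$ and the sparsity pattern is that of the connected graph $\cG$, and then invoke Perron--Frobenius; since $\A$ is symmetric, its spectral radius coincides with its largest eigenvalue $c - \lambda_{\min}(\M)$, so the Perron eigenvector is precisely a first eigenvector of $\M$. The paper instead rearranges $\M\v = \lambda_{\min}\v$ into $\v = \D^{-1}(\W_g + \lambda_{\min}\I)\v$ and applies Perron--Frobenius to $\A = \D^{-1}(\W_g + \lambda_{\min}\I)$, which forces it to prove separately (in the Appendix, via Rayleigh-quotient contradictions) that this non-symmetric matrix has unit spectral radius, and also relies on $\lambda_{\min} \geq 0$ (PSDness of $\M$) and on $\D$ being invertible with positive diagonal for non-negativity of $\A$. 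Your shift argument buys simplicity and generality: it needs neither the PSD assumption nor invertibility of $\D$, it keeps the auxiliary matrix symmetric so the identification of the Perron root is immediate, and it dispenses with the appendix lemma entirely; as a bonus it yields simplicity of $\lambda_{\min}(\M)$, which the paper does not state. The one point worth making explicit in your write-up is why the Perron root of $\A$ is $c - \lambda_{\min}(\M)$ rather than $|c - \lambda_{\max}(\M)|$: for an irreducible non-negative matrix the spectral radius is itself an eigenvalue, and since all eigenvalues of the symmetric $\A$ are real, that eigenvalue must be the largest one, namely $c - \lambda_{\min}(\M)$. This is standard, so it is a presentational remark rather than a gap.
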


\begin{proof}
By definition, $\M$ is a generalized graph Laplacian $\M = \D - \W_g$ with positive inter-node edge weights in $\W_g$ and positive degrees in $\D$.
Let $\v$ be the first eigenvector of $\M$ corresponding to eigenvalue $\lambda_{\min} \geq 0$ ($\M$ is PSD), \ie, 
\begin{align}
\M \v &= \lambda_{\min} \v \nonumber \\
(\D - \W_g) \v &= (\lambda_{\min} \I) \v \nonumber \\
\v &= \D^{-1} (\W_g + \lambda_{\min} \I) \v \nonumber
\end{align}
where $\I$ is an identity matrix, and $\lambda_{\min} \geq 0$ since $\M$ is PSD.
Thus, matrix $\A = \D^{-1} (\W_g + \lambda_{\min} \I)$ has right eigenvector $\v$ corresponding to eigenvalue $1$. 
$\A$ contains only non-negative entries and has \textit{unit spectral radius}, \ie, $\rho(\A) = 1$ (see the proof in the Appendix).

Note also that $\A$ is an irreducible matrix (since $\W_g$ is irreducible). 
Thus, $\v$ is a strictly positive eigenvector corresponding to eigenvalue and spectral radius 1 of matrix $\A$ by the Perron-Frobenius Theorem \cite{ma2012}.
\end{proof}

We now prove Theorem 1 as follows.
\begin{proof}
Denote by $\v$ a strictly positive eigenvector corresponding to the smallest eigenvalue $\lambda_{\min}$ of $\M$. 
Define $\S = \mathrm{diag}(1/v_1, \ldots, 1/v_K)$.
Then,
\begin{align}
\S \M \S^{-1} \S \v = \lambda_{\min} \S \v 
\end{align}
where $\S \v = \1 = [1, \ldots, 1]^{\top}$.
Let $\B = \S \M \S^{-1}$.
Then,
\begin{align}
\B \1 = \lambda_{\min} \1 .
\label{eq:GCT_proof1}
\end{align}
\eqref{eq:GCT_proof1} means that
\vspace{-0.05in}
\begin{align}
B_{ii} + \sum_{j \,|\, j \neq i} B_{ij} &= \lambda_{\min}, ~~~ \forall i. \nonumber 
\end{align}
Note that the off-diagonal terms $B_{ij} = (v_i/v_j) M_{ij} \leq 0$, since: i) $\v$ is strictly positive,  and ii) off-diagonal terms of generalized graph Laplacian $\M$ for a positive graph satisfy $M_{i,j} \leq 0$. 
Thus,
\begin{align}
B_{ii} - \sum_{j \,|\, j \neq i} |B_{ij}| &= \lambda_{\min}, ~~~ \forall i.
\end{align}
Thus, defining $\S = \mathrm{diag}(1/v_1, \ldots, 1/v_K)$ means that $\B = \S \M \S^{-1}$ has all its Gershgorin disc left-ends aligned at $\lambda_{\min}$. 
\end{proof}

\subsection{GDPA for Balanced Signed Graphs}
\label{subsec:GDA_Signed}

We generalize our GDPA analysis to signed graphs, where edge weights and self-loops can be negative.
Central to our analysis is the concept of \textit{graph balance}.
We discuss graph balance and the related Cartwright-Harary Theorem (CHT) \cite{cht1956}, then present our GDPA analysis.

\subsubsection{Cartwright-Harary Theorem} 

\begin{figure}
\begin{center}
\includegraphics[width=3in]{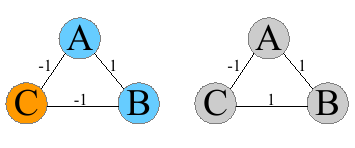}
\vspace{-0.2in}
\caption{Examples of a 3-node balanced signed graph (left) and an unbalanced signed graph (right).}
\label{fig:balanced_unbalanced}
\end{center}
\end{figure}

The concept of balance in a signed graph has been studied in many scientific disciplines, including psychology, social networks and data mining \cite{sgsn}.
We adopt the following definition of a balance graph for our analysis:
\begin{definition}
A signed graph $\cG$ is balanced iff $\cG$ does not contain any cycle with odd number of negative edges.
\end{definition}
For intuition, consider a graph $\cG$ with three nodes denoted by $A$, $B$ and $C$. 
Suppose that a positive/negative edge reflects pairwise friend/enemy relationship. 
An edge sign assignment of $(A,B)=1$ and $(B,C)=(C,A)=-1$---resulting in a cycle of two negative edges---means that $A$ and $B$ are friends, and that both $A$ and $B$ are enemies with $C$. 
See Fig.\;\ref{fig:balanced_unbalanced} for an illustration.
This graph is balanced; nodes can be grouped into \textit{two} clusters, $\{A, B\}$ and $\{C\}$, where nodes within a cluster are friends, and nodes across clusters are enemies. 

In contrast, an edge sign assignment of $(A,B)=(B,C)=1$ and $(C,A)=-1$---resulting in a cycle of one negative edge---means that both $A$ and $C$ are friends with $B$, but $A$ and $C$ are enemies.
This graph is not balanced; one cannot assign nodes to two distinct clusters with consistent signs as we did previously. 
We can generalize this example to the CHT \cite{cht1956} as follows.

\begin{theorem}
A graph $\cG$ is balanced iff its nodes $\cN$ can be partitioned into blue and red clusters, $\cN_b$ and $\cN_r$, such that a positive edge always connects two same-color nodes, and a negative edge always connects two opposite-color nodes.
\label{thm:CHT}
\end{theorem}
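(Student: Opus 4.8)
The plan is to recast the red/blue coloring as a vertex sign function $\sigma : \cN \to \{+1,-1\}$, with $\cN_b = \sigma^{-1}(+1)$ and $\cN_r = \sigma^{-1}(-1)$, so that the desired coloring condition becomes the single algebraic statement: for every edge $(i,j)$, $\mathrm{sign}(w_{ij}) = \sigma_i\,\sigma_j$. A positive edge then forces $\sigma_i\sigma_j=+1$ (same color) and a negative edge forces $\sigma_i\sigma_j=-1$ (opposite color), exactly matching the theorem. I would then prove the two implications separately.

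For the easy direction (coloring $\Rightarrow$ balanced), suppose such a $\sigma$ exists and take any cycle $v_0, v_1, \ldots, v_m = v_0$. The product of its edge signs telescopes, $\prod_{t=0}^{m-1} \sigma_{v_t}\sigma_{v_{t+1}} = \prod_{t} \sigma_{v_t}^2 = +1$, since each vertex sign occurs an even number of times around the cycle. A product of $\pm 1$ factors equals $+1$ precisely when the count of $-1$ factors is even, so every cycle has an even number of negative edges; hence $\cG$ is balanced. This step is routine.

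For the hard direction (balanced $\Rightarrow$ coloring) I would build $\sigma$ through a spanning tree. Assuming first that $\cG$ is connected, fix a spanning tree $T$ rooted at some $r$, set $\sigma_r = +1$, and define $\sigma_v = \prod_{e \in \mathrm{path}_T(r,v)} \mathrm{sign}(e)$, the product of edge signs along the unique tree path from $r$ to $v$. By this definition every tree edge $(a,b)$ automatically satisfies $\mathrm{sign}(a,b) = \sigma_a\sigma_b$. The real content is verifying the non-tree edges: for such an edge $e=(i,j)$, adding $e$ to $T$ creates a unique fundamental cycle $C_e$, and balance guarantees that $C_e$ carries an even number of negative edges, i.e. the product of all its edge signs is $+1$. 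The tree-path portion of $C_e$ contributes exactly $\sigma_i\sigma_j$ (the $r$-to-$i$ and $r$-to-$j$ paths share a common prefix up to their lowest common ancestor, whose sign contribution squares away), so $\mathrm{sign}(e)\,\sigma_i\sigma_j = +1$, giving $\mathrm{sign}(e)=\sigma_i\sigma_j$ as required. For disconnected $\cG$ I would apply this construction to each connected component independently and take the union of the resulting colorings to obtain the global partition $\cN_b, \cN_r$.

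The main obstacle is the well-definedness of $\sigma$: a priori the color of $v$ could depend on which $r$-to-$v$ path is used to compute the sign product, and it is precisely the no-odd-negative-cycle hypothesis that removes this ambiguity. Routing the definition through a fixed spanning tree is the device that isolates this difficulty --- it makes $\sigma$ well-defined by construction and confines the use of balance to the fundamental cycles $\{C_e\}$, which is the minimal set of cycles on which it is actually needed. The only bookkeeping to handle with care is the telescoping claim that the tree-path sign product between $i$ and $j$ equals $\sigma_i\sigma_j$, which I would verify using the lowest-common-ancestor decomposition noted above.
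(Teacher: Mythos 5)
Your proof is correct, but note that the paper does not actually prove this statement: Theorem~\ref{thm:CHT} is the classical Cartwright--Harary theorem, quoted from the 1956 reference \cite{cht1956} and motivated only by the informal three-node friend/enemy example in Fig.~\ref{fig:balanced_unbalanced}. So there is no ``paper proof'' to compare against; what you have supplied is the standard textbook argument, and it is sound. The reformulation of the two-coloring as a vertex sign function $\sigma$ with $\mathrm{sign}(w_{ij})=\sigma_i\sigma_j$ is the right normalization; the easy direction's telescoping product over a closed walk is correct (every vertex sign appears an even number of times); and in the hard direction the spanning-tree potential $\sigma_v=\prod_{e\in\mathrm{path}_T(r,v)}\mathrm{sign}(e)$ is well defined by construction, makes tree edges consistent automatically, and reduces the verification of non-tree edges to the fundamental cycles, exactly where the no-odd-negative-cycle hypothesis is needed. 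Your lowest-common-ancestor bookkeeping --- the shared prefix of the two root paths contributes a squared factor equal to $+1$, so the tree-path sign product between $i$ and $j$ equals $\sigma_i\sigma_j$ --- is the one step that genuinely requires care, and you handle it correctly; the component-by-component treatment of disconnected graphs is also fine. The only convention worth making explicit is that ``edges'' here means inter-node edges $(i,j)$ with $i\neq j$: the paper's graphs also carry self-loops $\cU$, which are excluded from the cycle/balance structure, and your argument implicitly (and correctly) ignores them.
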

One interpretation of Theorem\;\ref{thm:CHT} is that if each cluster of nodes connected by positive edges in a balanced graph are merged into a single node, then the merged nodes, connected by negative edges only, form a \textit{bipartite graph}---a 2-colorable graph.  

There are two implications.
First, to determine if graph $\cG$ is balanced, instead of examining all cycles in $\cG$ to check if each contains an odd number of negative edges, one can check if nodes can be colored into blue and red with consistent edge signs as stated in Theorem \ref{thm:CHT}. 
Second, we can use CHT to prove that GDPA is possible for a Laplacian matrix corresponding to an irreducible, balanced signed graph.
We describe this next.

\subsubsection{GDPA Analysis for Signed Graphs}

\begin{figure}
\begin{center}
\includegraphics[width=3.4in]{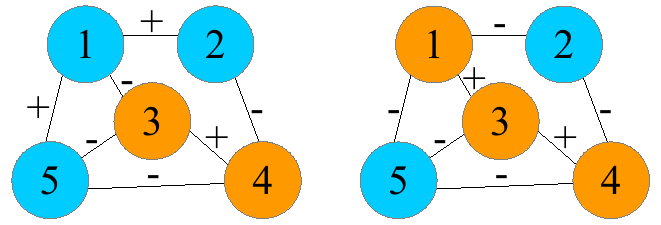}
\vspace{-0.1in}
\caption{Example of two 5-node balanced graphs. Node 1 has turned from blue to red from the left to the right.}
\label{fig:pseudo-bipartite}
\end{center}
\end{figure}

Consider an irreducible, balanced signed graph $\cG(\cN,\cE^+ \cup \cE^-,\cU)$ with nodes $\cN$, positive and negative inter-node edges, $\cE^+$ and $\cE^-$, and self-loops $\cU$. 
According to CHT, nodes $\cN$ can be partitioned into blue and red clusters, $\cN_b$ and $\cN_r$, such that \begin{enumerate}
\item $(i,j) \in \cE^+$ implies that either $i,j \in \cN_b$ or $i,j \in \cN_r$.
\item $(i,j) \in \cE^-$ implies that either $i \in \cN_b$ and $j \in \cN_r$, or $i \in \cN_r$ and $j \in \cN_b$.
\end{enumerate}
As an example, consider the 5-node balanced signed graph in Fig.\;\ref{fig:pseudo-bipartite}\,(left), where nodes 1, 2 and 5 are colored blue, while nodes 3 and 4 are colored red.
Only positive edges connect same-color node pairs, while negative edges connect opposite-color node pairs. 
There is no cycle of odd number of negative edges.



We now state a generalization of Theorem 1 to balanced signed graphs as follows:
\begin{theorem}
Denote by $\M$ a generalized graph Laplacian matrix coresponding to a balanced, irreducible signed graph $\cG$.
Denote by $\v$ the first eigenvector of $\M$ corresponding to the smallest eigenvalue $\lambda_{\min}$.
Define $\B = \S \M \S^{-1}$ as a similarity transform of $\M$, where $\S = \text{diag}(s_1, \ldots, s_K)$.
If $s_i = 1/v_i$, where $v_i \neq 0, \forall i$, then Gershgorin disc left-ends of $\B$ are aligned at $\lambda_{\min}$, \ie, $B_{ii} - \sum_{j \,|\, j\neq i} |B_{ij}| = \lambda_{\min}, \forall i$. 
\label{thm:GDA2}
\end{theorem}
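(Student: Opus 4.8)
The plan is to reduce the balanced signed-graph case to the positive-graph case already settled in Theorem 1, using the structural characterization of balance from the Cartwright--Harary Theorem (Theorem \ref{thm:CHT}). By CHT the node set partitions into blue and red clusters $\cN_b, \cN_r$; I would encode this coloring in a diagonal \emph{signature} matrix $\T = \text{diag}(t_1, \ldots, t_K)$ with $t_i = +1$ for $i \in \cN_b$ and $t_i = -1$ for $i \in \cN_r$. Since $\T = \T^{\top} = \T^{-1}$, forming $\M' = \T \M \T$ is simultaneously a congruence and a similarity transform, so $\M'$ is symmetric and shares the spectrum of $\M$; in particular $\lambda_{\min}(\M') = \lambda_{\min}(\M)$.

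First I would check that $\M'$ is the generalized graph Laplacian of an irreducible \emph{positive} graph. Its diagonal is unchanged, $M'_{ii} = t_i^2 M_{ii} = M_{ii}$, while $M'_{ij} = t_i t_j M_{ij}$ off the diagonal. For a positive edge $(i,j) \in \cE^+$ the endpoints share a color, so $t_i t_j = 1$ and $M'_{ij} = M_{ij} = -w_{ij} < 0$; for a negative edge $(i,j) \in \cE^-$ the endpoints differ in color, so $t_i t_j = -1$ and $M'_{ij} = -M_{ij} = w_{ij} < 0$. Hence every off-diagonal of $\M'$ is nonpositive, and since $\T$ only flips signs the connectivity and irreducibility are preserved. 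Thus $\M'$ satisfies the hypotheses of Lemma \ref{lemma:posEV} and Theorem \ref{thm:GDA}.

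The central step is to transfer this back to the eigenvector of $\M$. By Lemma \ref{lemma:posEV}, $\M'$ admits a strictly positive first eigenvector $\v'$, and by Perron--Frobenius the associated $\lambda_{\min}$ is a \emph{simple} eigenvalue, so the first eigenvector of $\M$ is unique up to scaling. From $\M' \v' = \lambda_{\min} \v'$ and $\M' = \T \M \T$ I get $\M (\T \v') = \lambda_{\min} (\T \v')$, so $\v$ is a nonzero scalar multiple of $\T \v'$, giving $v_i = c\, t_i v'_i$ with $v'_i > 0$. Consequently $v_i \neq 0$ for all $i$ (justifying that $\S$ is well defined), and the sign of $v_i$ matches its color $t_i$ independently of the normalization $c$. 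This sign-alignment of the eigenvector is the main obstacle, and the signature-transform trick is precisely what removes it by importing Perron--Frobenius from the positive case.

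With this in hand the conclusion follows as in Theorem \ref{thm:GDA}. Setting $s_i = 1/v_i$ yields $\S \v = \1$, hence $\B \1 = \lambda_{\min} \1$ and $B_{ii} + \sum_{j \,|\, j \neq i} B_{ij} = \lambda_{\min}$ for every row $i$. It remains to verify $B_{ij} = (v_i/v_j) M_{ij} \leq 0$ for all off-diagonals: on a positive edge $v_i, v_j$ share a sign so $v_i/v_j > 0$ while $M_{ij} < 0$; on a negative edge $v_i, v_j$ have opposite signs so $v_i/v_j < 0$ while $M_{ij} > 0$. In both cases $B_{ij} \leq 0$, so $|B_{ij}| = -B_{ij}$ and $B_{ii} - \sum_{j \,|\, j \neq i} |B_{ij}| = B_{ii} + \sum_{j \,|\, j \neq i} B_{ij} = \lambda_{\min}$, which is the claimed perfect alignment.
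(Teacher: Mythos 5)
Your overall route is the same as the paper's: use the Cartwright--Harary two-coloring to build a $\pm 1$ diagonal signature matrix, conjugate $\M$ by it to obtain a matrix $\M' = \T\M\T$ with nonpositive off-diagonals, import strict positivity of the first eigenvector from the positive-graph case, map it back to a nowhere-zero eigenvector $\v$ of $\M$, and then check sign consistency row by row. Your final sign-consistency argument and the derivation $\B\1 = \lambda_{\min}\1$ match the paper, and your explicit remark that Perron--Frobenius gives simplicity of $\lambda_{\min}$ (so that ``the'' first eigenvector is well defined up to scale) is a nice point the paper leaves implicit.

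There is, however, one concrete gap: you assert that $\M'$ ``satisfies the hypotheses of Lemma \ref{lemma:posEV} and Theorem \ref{thm:GDA}'' after checking only that its off-diagonals are nonpositive and that irreducibility is preserved. Lemma \ref{lemma:posEV} requires a generalized Laplacian of an irreducible \emph{positive} graph, and its proof relies on the degree matrix $\D$ having strictly positive diagonal (so that $\A = \D^{-1}(\W_g + \lambda_{\min}\I)$ is well defined and entrywise nonnegative) and on $\lambda_{\min} \geq 0$. Neither is automatic here: the diagonal of $\M'$ equals that of $\M$, i.e.\ $M_{ii} = \sum_j w_{ij} + u_i$, and for a general signed graph with negative edge weights and self-loops these degrees can be zero or negative, and $\M'$ need not be PSD. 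This is exactly why the paper does not apply Lemma \ref{lemma:posEV} to $\M'$ directly but to the shifted matrix $\M'' = \M' + \epsilon\I$ with $\epsilon$ chosen as in \eqref{eq:append0} to force all degrees $D''_{ii} > 0$; since $\M''$ has the same eigenvectors as $\M'$, the strictly positive first eigenvector transfers back. Your argument goes through once you insert this shift; without it, the appeal to Perron--Frobenius is not justified.
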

We prove Theorem \ref{thm:GDA2} as follows.

\begin{proof}
We first reorder blue nodes before red nodes in the rows and columns of $\M$, so that $\M$ can be written as a $2 \times 2$ block matrix as follows:
\begin{align}
\M = \left[ \begin{array}{cc}
\M_{11} & \M_{12} \\
\M_{12}^{\top} & \M_{22}
\end{array} \right]
\end{align}
where off-diagonal terms in $\M_{11}$ ($\M_{22}$) are negative stemming from positive edge weights $w_{ij} \geq 0$ connecting same-color nodes, and entries in $\M_{12}$ are positive stemming from negative edge weights $w_{ij} \leq 0$ connecting different-color nodes.
Define now a similarity transform $\M'$ of $\M$:
\begin{align}
\M' &= \left[ \begin{array}{cc}
\I_b & \0 \\
\0 & -\I_r
\end{array} \right]
\left[ \begin{array}{cc}
\M_{11} & \M_{12} \\
\M_{12}^{\top} & \M_{22}
\end{array} \right]
\left[ \begin{array}{cc}
\I_b & \0 \\
\0 & -\I_r
\end{array} \right] \\
&= \left[ \begin{array}{cc}
\M_{11} & -\M_{12} \\
-\M_{12}^{\top} & \M_{22}
\end{array} \right].
\end{align}
We interpret $\M' = \D - \W_g'$ as a generalized graph Laplacian matrix for a new graph $\cG'(\cN,\cE',\cU')$ derived from $\cG$, where $\cG'$ retains positive edges $\cE^+$, but for each negative edge $\cE^-$, $\cG'$ switches its sign to positive.
Thus,
\begin{align}
w'_{ij} = \left\{ \begin{array}{ll}
w_{ij} & \mbox{if}~ (i,j) \in \cE^+ \\
-w_{ij} & \mbox{if}~ (i,j) \in \cE^- 
\end{array} \right. .
\end{align}
To retain the same degree matrix $\D$ as $\cG$, we assign a self-loop for each node $i$ in $\cG'$ with weight $u'_{i}$ defined as
\begin{align}
u'_{i} &= u_{i} + 2 \sum_{j \,|\, (i,j) \in \cE^-} w_{ij}.
\end{align}

As a similarity transform, $\M$ and $\M'$ have the same eigenvalues, and an eigenvector $\z$ for $\M'$ maps to an eigenvector $\v$ for $\M$ as follows:
\begin{align}
\v = \left[ \begin{array}{cc}
\I_b & \0 \\
\0 & -\I_r
\end{array} \right] \z .
\label{eq:eigVecMap}
\end{align}

Finally, we define a \textit{shifted} graph Laplacian matrix $\M'' = \M' + \epsilon \I$, where constant $\epsilon > 0$ is
\begin{align}
\epsilon &> \max_{i} \left\{ 
- \sum_{j \,|\, (i,j) \in \cE^+ \cup \cE^-} w_{ij} - u_i
\right\} .
\label{eq:append0}
\end{align}
$\M''$ has the same set of eigenvectors as $\M'$, and its eigenvalues are the same as $\M'$ but offset by $\epsilon$. 

$\M''$ has strictly positive node degrees $D''_{ii}$, \ie, 
\begin{align}
D''_{ii} &= \sum_{j | (i,j) \in \cE'} w'_{ij} + u'_{i} + \epsilon \nonumber \\
&= \sum_{j | (i,j) \in \cE^+} w_{ij} -
\sum_{j |(i,j) \in \cE^-} w_{ij} +
u_i + 2 \sum_{j | (i,j) \in \cE^-} w_{ij} + \epsilon \nonumber \\
&= \sum_{j|(i,j) \in \cE^+} w_{ij} +
\sum_{j | (i,j) \in \cE^-} w_{ij} + u_i + \epsilon \stackrel{(a)}{>} 0 \nonumber .
\end{align}
$(a)$ is due to the assumed inequality for $\epsilon$ in \eqref{eq:append0}.

From Theorem \ref{thm:GDA}, given $\M''$ is a Laplacian matrix for an irreducible positive graph, first eigenvector of $\M''$ (also first eigenvector of $\M'$) $\z$ is a strictly positive vector. 
Thus, from \eqref{eq:eigVecMap}, corresponding $\v$ for $\M$ is a strictly non-zero first eigenvector, \ie, $v_i \neq 0, \forall i$. 

Having established first eigenvector $\v$ of $\M$, we define diagonal matrix $\S = \text{diag}(1/v_1, \ldots, 1/v_N)$, and write
\begin{align}
\S \M \S^{-1} \S \v &= \lambda_{\min} \S \v \\
\B \1 &= \lambda_{\min} \1
\label{eq:append2}
\end{align}
where $\B = \S \M \S^{-1}$.
Each row $i$ in (\ref{eq:append2}) states that 
\begin{align}
B_{ii} + \sum_{j \,|\, j \neq i} B_{ij} &= \lambda_{\min} \\
M_{ii} + s_i \sum_{j \,|\, j \neq i} M_{ij} / s_j &= \lambda_{\min} .
\label{eq:gda2}
\end{align}
Suppose $i$ is a red node. 
Then $v_i = -z_i < 0$, and thus $s_i = 1/v_i < 0$.
For each red neighbor $j$ of $i$, $s_j < 0$, and $w_{ij} > 0$ means that $M_{ij} < 0$.
We can hence conclude that $s_i M_{ij} / s_j < 0$ and $s_i M_{ij} / s_j  = - |s_i M_{ij} / s_j|$.
For each blue neighbor $j$ of $i$, $s_j > 0$, and $w_{ij} < 0$ means that $M_{ij} > 0$. 
We can hence conclude also that $s_i M_{i,j} / s_j < 0$ and $s_i M_{ij} / s_j  = - |s_i M_{ij} / s_j|$.
Similar analysis can be performed if $i$ is a blue node instead.
Thus (\ref{eq:gda2}) can be rewritten as 
\begin{align}
M_{ii} - \sum_{j \,|\, j \neq i} | s_i M_{ij} / s_j| = \lambda_{\min} .
\end{align}
In other words, the left-end of $\B$'s $i$-th Gershgorin disc---center $M_{ii}$ minus radius $\sum_{j \neq i} | s_i M_{ij} / s_j|$---is aligned at $\lambda_{\min}$.
This holds true for all $i$.
\end{proof}

\section{Optimizing Metric Diagonals}
\label{sec:learning}
We now use GDPA to optimize a metric matrix $\M$. 
We first define our search space of metric matrices, and our problem to optimize $\M$'s diagonal terms.
We then describe how GDPA can be used in combination with the Frank-Wolfe method to speed up our optimization.

\subsection{Search Space of Graph Metric Matrices}

We assume that associated with each data sample $i$ is a length-$K$ feature vector $\f_i \in \mathbb{R}^K$. 
A \textit{metric matrix} $\M \in \mathbb{R}^{K \times K}$ defines the feature distance $\delta_{ij}(\M)$---the \textit{Mahalanobis distance} \cite{mahalanobis1936}---between samples $i$ and $j$ as 
\begin{equation}
\delta_{ij}(\M) = (\f_i-\f_j)^{\top} \mathbf{M} (\f_i-\f_j) .
\label{eq:featureDist}
\end{equation}
The definition of a metric \cite{fsp2014} requires $\M$ to be a real, symmetric and PD matrix, denoted by $\M \succ 0$.
This means that $\delta_{ij}(\M)$ is strictly positive unless $\f_i = \f_j$, \ie, $(\f_i-\f_j)^{\top} \M (\f_i-\f_j) > 0$ if $\f_i - \f_j \neq \0$. 

To efficiently enforce $\M \succ 0$, we invoke our developed GDPA theory for generalized graph Laplacian matrices.
We first formally define the search space $\cS$ of matrices for our optimization framework as follows: 
\begin{definition}
$\cS$ is a space of real, symmetric matrices that are generalized graph Laplacian matrices corresponding to irreducible, balanced signed graphs.
\end{definition}

We call a matrix $\M \in \cS$ that is also PD a \textit{graph metric matrix}.

\subsection{Problem Formulation}

We next pose an optimization problem for $\M$: 
find an optimal graph metric matrix $\M$---leading to feature distances $\delta_{ij}(\M)$ in \eqref{eq:featureDist}---that yields the smallest value of a convex differentiable objective $Q(\{\delta_{ij}(\M)\})$, \ie, 
\begin{align}
\min_{\mathbf{M} \in \cS}
Q\left(\{\delta_{ij}(\M)\} \right),
~~~\mbox{s.t.}~~ 
\left\{ \begin{array}{l}    
\text{tr}(\M) \leq C \\
\M \succ 0
\end{array} \right.
\label{eq:optimize_M}
\end{align}
where $C > 0$ is a chosen parameter. 
Constraint $\text{tr}(\M) \leq C$ is added to upper-bound feature distances $\delta_{ij}(\M)$. 
Assuming feature vectors $\f_i$ are normalized \cite{classificationpami19} so that $\|\f_i\|_2 \leq 1$, $\forall i$, and eigen-decomposition $\M = \V \bLambda \V^{\top}$ where $\bLambda = \text{diag}(\lambda_1, \ldots, \lambda_K)$, we can bound $\delta_{ij}(\M)$ as follows:
\begin{align}
(\f_i - \f_j)^{\top} \M (\f_i - \f_j) &= (\f_i - \f_j)^{\top} \V \bLambda \V^{\top} (\f_i - \f_j)
\nonumber \\
&\leq \sum_{k=1}^K \lambda_k \alpha_k^2 
\leq \alpha_{\max}^2 \sum_{k=1}^K \lambda_k 
\nonumber \\
&= \alpha_{\max}^2 \text{tr}(\M) \leq 4 C, 
\end{align}
where $\alpha_k = \v_k^{\top} (\f_i - \f_j)$ is the inner product of eigenvector $\v_k$ and $\f_i - \f_j$---the $k$-th \textit{Graph Fourier transform} (GFT) coefficient of $\f_i - \f_j$ \cite{8347162}. 
Because eigenvectors in $\V$ are unit-norm and $\|\f_i - \f_j\|_2 \leq 2$, $\alpha_{\max} = 2$.

For stability, we assume also that the objective is lower-bounded, \ie, $\min_{\M \succ 0} Q(\{\delta_{ij}(\M)\}) \geq \kappa > -\infty$ for some constant $\kappa$.
We examine examples of objective $Q(\{\delta_{i,j}(\M)\})$ in Section\;\ref{sec:results}. 

Our strategy to solve \eqref{eq:optimize_M} is to optimize $\mathbf{M}$'s diagonal terms \textit{plus} one row/column of off-diagonal terms at a time using the Frank-Wolfe (FW) iterative method \cite{pmlr-v28-jaggi13}, where each FW iteration is solved as a linear program (LP) until convergence. 
We discuss first the initialization of $\M$, then the optimization setup for $\M$'s diagonal terms.
For notation convenience, in the sequel we write the objective simply as $Q(\M)$, with the understanding that metric $\M$ computes first feature distances $\delta_{ij}(\M)$, which in turn determines objective $Q(\{\delta_{ij}(\M)\})$.

\subsubsection{Initialization of Metric $\M$}
\label{eq:subsubsec:initial}

We initialize a valid graph metric matrix $\M^0$ as a Laplacian matrix corresponding to a \textit{tree graph} as follows:
\begin{enumerate}
\item Initialize diagonal terms as $M_{ii}^0 := C/K, \forall i$.
\item Initialize off-diagonal terms $M_{ij}^0$, $i \neq j$, by iteratively moving one node from sets $\cN_c'$ to $\cN_c$:
\begin{enumerate}
\item Initialize $\cN_c$ as a random node $i \in \cN$ and $\cN_c' = \cN \setminus \{i\}$. 
\item At each iteration, identify node $j \in \cN_c'$ with the largest correlation in magnitude, $|E_{ij}|$, in an empirical covariance matrix $\mathbf{E}$ computed from data, to a node $i \in \cN_c$.
Move $j$ to $\cN_c$, and assign $M_{ij}^0=M_{ji}^0:=\text{sign}(E_{ij})C/K^2$.
\end{enumerate}
\end{enumerate}
Step 2 is equivalent to adding one leaf node at a time to the tree graph in $\cN_c$.
Incrementally adding leaf nodes means that $\cN_c$ remains a tree, which contains no cycles.
Thus, a tree graph is always balanced.

$\mathbf{E}$ is computed from available feature vectors $\f_i, \forall i \in \{1, \ldots, N\}$, where $N$ is the number of available samples from data with feature vectors. 
Using the largest correlation magnitudes to establish the bare minimum connectivity for a connected graph minimizes the likelihood of assigning the wrong edge signs.

For example, a $4 \times 4$ initial matrix $\M^0$ corresponding to a 4-node tree with edges $(1,2), (1,3), (3,4)$ is

\vspace{-0.05in}
\begin{scriptsize}
\begin{align}
\M^0 = C/4 \left[ \begin{array}{cccc}
1 & \mbox{sign}(E_{12})/4 & \mbox{sign}(E_{13})/4 & 0 \\
\mbox{sign}(E_{12})/4 & 1 & 0 & 0 \\
\mbox{sign}(E_{13})/4 & 0 & 1 & \mbox{sign}(E_{34})/4 \\
0 & 0 & \mbox{sign}(E_{34})/4 & 1
\end{array} \right].
\nonumber 
\end{align}
\end{scriptsize}\noindent
Initialization of the diagonal terms ensures that constraint $\text{tr}(\M^0) \leq C$ is satisfied.
Initialization of the off-diagonal terms ensures that $\M^0$ is symmetric and PD---Gershgorin disc left-ends are $C/K - \sum_{i\neq j}|M^0_{ij}| > C/K - C(K-1)/K^2 > 0$. 
Thus, we can conclude that $\M^0$ is a graph metric matrix, \ie, $\M^0 \in \cS$ and $\M^0 \succ 0$.

Given $\M^0$, we specify sets of blue ($\mathcal{N}_b$) and red ($\mathcal{N}_r$) color nodes according to edge signs in tree graph $\cG$ corresponding to $\M^0$.
Specifically, we assign the initial node $i$ blue. 
Then, we assign each of its neighbors, $j$, blue (red) if sign of edge $(i,j)$ is positive (negative), then node $j$'s neighbors and so on.
This way, all nodes has colors assigned that are consistent with edge signs in $\cG$.

\subsubsection{Optimization of Diagonals}

Optimizing $\M$'s diagonal terms $M_{ii}$ alone,  \eqref{eq:optimize_M} becomes
\begin{align}
&\min_{\{M_{ii}\}} ~~
Q(\M)
\label{eq:optimize_diagonal} \\
& \text{s.t.} \quad \,\M \succ 0; \;\;\;
\sum_{i} M_{ii} \leq C; ~~~
M_{ii} > 0, \,\forall i \nonumber
\end{align}
where $\text{tr}(\M) = \sum_i M_{ii}$. 
Because the diagonals do not affect the irreducibility and balance of matrix $\M$, the only requirement for $\M$ to be a graph metric is just $\M \succ 0$.

\subsection{Replacing PD Cone with Linear Constraints}

To efficiently enforce $\M \succ 0$, we derive sufficient linear constraints using GCT \cite{gahc}.
A direct application of GCT on $\M$, as discussed in Section\;\ref{subsubsec:GCT}, is to require \textit{all} Gershgorin disc left-ends to be positive.
This translates to a linear constraint for each row $i$:
\begin{align}
M_{ii} \geq \sum_{j \,|\, j \neq i} |M_{ij}| + \rho,
~~~~~~ \forall i \in \{1, \ldots, K\}
\label{eq:GCT_linConst}
\end{align}
where $\rho > 0$ is a small parameter. 

However, as discussed in Section \ref{subsubsec:GCT}, GCT lower bound $\lambda^-_{\min} \triangleq \min_i M_{ii} - \sum_{j \neq i} |M_{ij}|$ for $\lambda_{\min}$ can be loose.
When optimizing $\M$, enforcing \eqref{eq:GCT_linConst} directly can mean a severely restricted space compared to the original $\{\M ~|~ \M \succ 0\}$ in \eqref{eq:optimize_diagonal}, resulting in an inferior solution. 

To derive more appropriate linear constraints---thus a more comparable search space to original $\{\M ~|~ \M \succ 0\}$ when solving $\min Q(\M)$---we leverage our GDPA theory and examine instead the Gershgorin discs of a similarity-transformed matrix $\B$ from $\M$, \ie, $\B = \S \M \S^{-1}$,
where $\S = \text{diag}(s_1, \ldots, s_K)$.
This leads to the following linear constraints instead:
\begin{align}
M_{ii} \geq \sum_{j \,|\, j \neq i} \left| \frac{s_i M_{ij}}{s_j} \right| + \rho, 
~~~~ \forall i \in \{1, \ldots, K\} .
\label{eq:scaled_linConst}
\end{align}

Note that \textit{any} choice of $\{s_i\}^K_{i=1}$ such that diagonal matrix $\S$ is invertible would be sufficient for constraints \eqref{eq:scaled_linConst} to enforce PDness of a solution $\M$. 
However, \textit{the crux is to select the most appropriate scalars} $\{s_i\}^K_{i=1}$. 

Suppose that the optimal solution $\M^*$ to \eqref{eq:optimize_diagonal} is known.
Then, using the first eigenvector $\v^*$ of $\M^*$ corresponding to the smallest eigenvalue $\lambda^*_{\min} > 0$, one can compute $s_i = 1/v^*_i, \forall i$, to define linear constraints \eqref{eq:scaled_linConst}. 
By Theorem \ref{thm:GDA2}, disc left-ends of $\B = \S \M^* \S^{-1}$ are aligned exactly at $\lambda^*_{\min}$, and thus $\M^*$ is a feasible solution to \eqref{eq:scaled_linConst}. 
Linear constraints \eqref{eq:scaled_linConst} are \textit{tightest possible} for solution $\M^*$, in the sense that $\lambda^-_{\min}(\B) = \lambda_{\min}(\B) =  \lambda_{\min}(\M^*)$. 

Of course, in practice we do not know the optimal solution $\M^*$ \textit{a priori}. 
Thus, we solve the optimization iteratively, where we use the previous solution $\M^t$ at iteration $t$ to compute first eigenvector $\v^t$ and then scalars $\{s_i^t\}^K_{i=1}$, solve for a better solution $\M^{t+1}$ using linear constraints \eqref{eq:scaled_linConst}, compute new scalars again etc until convergence.
Specifically,
\begin{enumerate}
\item Given scalars $s_i^t$'s, compute solution $\M^{t+1}$ minimizing objective $Q(\M)$ subject to \eqref{eq:scaled_linConst}, \ie, 
\begin{align}
\min_{\{M_{ii}\}} &
Q \left( \M \right) \label{eq:optimize_diagonal2}  \\
\text{s.t.} & ~~ M_{ii} \geq \sum_{j \,|\, j \neq i} \left| \frac{s^t_i M_{ij}}{s^t_j} \right| + \rho, \forall i;
~~~\sum_{i} M_{ii} \leq C .
\nonumber
\end{align}
\item Given computed $\M^{t+1}$, update scalars $s_i^{t+1} = 1/v_i^{t+1}$ where $\v^{t+1}$ is the first eigenvector of $\M^{t+1}$.
\item Increment $t$ and repeat until convergence.
\end{enumerate}



\subsubsection{Algorithm Convergence}

We prove convergence to a local minimum for our iterative algorithm.
We first show that after the scalars in \eqref{eq:optimize_diagonal2} are updated to $\{s_i^{t+1}\}^K_{i=1}$, previous solution $\M^{t+1}$ remains feasible to \eqref{eq:optimize_diagonal2}.
We state this formally as a lemma:

\begin{lemma}
Solution $\M^{t+1}$ at iteration $t+1$ remains a feasible solution in optimization \eqref{eq:optimize_diagonal2} during the next iteration when constraints in \eqref{eq:optimize_diagonal2} are updated as $s_i^{t+1} = 1/v_i^{t+1}$, where $\v^{t+1}$ is the first eigenvector of $\M^{t+1}$.
\label{lemma:feasible}
\end{lemma}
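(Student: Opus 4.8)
The plan is to verify the two constraint types in \eqref{eq:optimize_diagonal2} separately, with the perfect-alignment guarantee of Theorem \ref{thm:GDA2} doing the heavy lifting. First observe that the trace constraint $\sum_i M_{ii} \leq C$ does not involve the scalars $\{s_i\}$ at all, so it is inherited unchanged: since $\M^{t+1}$ solved \eqref{eq:optimize_diagonal2} at iteration $t$, it already satisfies $\sum_i M_{ii}^{t+1} \leq C$, and refreshing the scalars does not disturb this. Hence only the $K$ disc constraints require attention.

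Next I would establish that $\M^{t+1}$ is positive definite with $\lambda_{\min}(\M^{t+1}) \geq \rho$. Because $\M^{t+1}$ is feasible for \eqref{eq:optimize_diagonal2} under the old scalars $\{s_i^t\}$, each Gershgorin disc of $\B^t := \S^t \M^{t+1} (\S^t)^{-1}$ has left-end $M_{ii}^{t+1} - \sum_{j \neq i} |s_i^t M_{ij}^{t+1}/s_j^t| \geq \rho$. The similarity transform preserves the spectrum, so by GCT the smallest disc left-end lower-bounds $\lambda_{\min}(\B^t) = \lambda_{\min}(\M^{t+1})$; therefore $\lambda_{\min}(\M^{t+1}) \geq \rho > 0$.

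Now I would apply Theorem \ref{thm:GDA2} to $\M^{t+1}$ itself. Since the diagonal optimization at iteration $t$ leaves the off-diagonal entries—and hence the edge signs, balance, and irreducibility of the underlying graph—untouched, $\M^{t+1} \in \cS$; moreover its strictly positive diagonals keep it a valid generalized graph Laplacian. Theorem \ref{thm:GDA2} then guarantees that the first eigenvector $\v^{t+1}$ has no zero entries, so the update $s_i^{t+1} = 1/v_i^{t+1}$ is well defined, and that with these new scalars every Gershgorin disc left-end of $\B^{t+1} := \S^{t+1} \M^{t+1} (\S^{t+1})^{-1}$ is perfectly aligned at $\lambda_{\min}(\M^{t+1})$, i.e. $M_{ii}^{t+1} - \sum_{j \neq i} |s_i^{t+1} M_{ij}^{t+1}/s_j^{t+1}| = \lambda_{\min}(\M^{t+1})$ for all $i$.

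Combining the two facts closes the argument: for every row $i$ we obtain $M_{ii}^{t+1} - \sum_{j \neq i} |s_i^{t+1} M_{ij}^{t+1}/s_j^{t+1}| = \lambda_{\min}(\M^{t+1}) \geq \rho$, which is exactly the refreshed disc constraint in \eqref{eq:optimize_diagonal2}; hence $\M^{t+1}$ stays feasible. The point requiring the most care—and the main obstacle—is ensuring the hypotheses of Theorem \ref{thm:GDA2} genuinely hold for $\M^{t+1}$, namely membership in $\cS$ and a nonvanishing first eigenvector. Both are secured because the diagonal-only update preserves the graph's balanced, irreducible structure, and the construction underlying Theorem \ref{thm:GDA2} already certifies $v_i^{t+1} \neq 0$.
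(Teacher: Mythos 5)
Your proposal is correct and follows essentially the same route as the paper's proof: use feasibility under the old scalars together with GCT to conclude $\lambda_{\min}(\M^{t+1}) \geq \lambda^-_{\min}(\B) \geq \rho$, then invoke Theorem~\ref{thm:GDA2} to realign all disc left-ends of the newly scaled matrix exactly at $\lambda_{\min}(\M^{t+1})$, which therefore still clears $\rho$. Your explicit checks that the trace constraint is scalar-independent and that the diagonal-only update preserves membership in $\cS$ are sensible additions but do not change the argument.
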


\begin{proof}
Since $\M^{t+1}$ is a feasible solution to \eqref{eq:optimize_diagonal2} for scalars $\{s_i^{t}\}^K_{i=1}$ by assumption, $\lambda_{\min}(\M^{t+1}) \geq \lambda^-_{\min}(\B) \geq \rho$ for similarity transform $\B = \S \M^{t+1} \S^{-1}$ and $\S = \text{diag}(s_1^t, \ldots, s_K^t)$.
Since $\M^{t+1} \in \cS$, by Theorem\;\ref{thm:GDA2} all its Gershgorin disc left-ends can be aligned at $\lambda_{\min}(\M^{t+1})$ using scalars $\{s_i^{t+1}\}^K_{i=1}$, where $s_i^{t+1} = 1/v_i^{t+1}$ and $\v^{t+1}$ is the first eigenvector of $\M^{t+1}$. 
Thus, we can write
\begin{align}
M_{ii}^{t+1} - \sum_{j\,|\,j \neq i} \left| \frac{s^{t+1}_i M_{ij}^{t+1}}{s^{t+1}_j} \right| &= \lambda_{\min}(\M^{t+1}) 
\nonumber \\
&\geq \lambda^-_{\min}(\B) \geq \rho .
\nonumber 
\end{align}
Since $\M^{t+1}$ satisfies all constraints in \eqref{eq:optimize_diagonal2} using scalars $\{s_i^{t+1}\}^K_{i=1}$, it is a feasible solution.
\end{proof}

Lemma\;\ref{lemma:feasible} means that the objective $Q(\M^{t})$ is \textit{non-increasing} across iterations until local convergence in optimization variable $\M^t$.
Since the previous optimal solution remains feasible in the next GDPA-based LP iteration, $Q(\M^{t+1})\leq Q(\M^t)$.
Given that the objective is lower-bounded by $\kappa$ by assumption, this means that the iterative algorithm is guaranteed to converge and not oscillate.

This also means that when the algorithm terminates upon solution convergence at $Q(\M^{t+1})$, the converged solution $\M^o$ is optimal both for scalars $\{s^{t+1}_i\}$ at iteration $t+1$ and scalars $\{s^t_i\}$ at iteration $t$.
Thus, within a \textit{local neighborhood} of scalars $\{s_i\}$ where $s_i = \alpha s^{t+1}_i + (1-\alpha) s^{t}_i, \forall i$, where $0 \leq \alpha \leq 1$, $\M^o$ is an optimal solution, and thus a local optimum.


\vspace{0.1in}
\noindent
\textbf{Remark}:
We see the importance of a first eigenvector $\v^{t+1}$ of $\M^{t+1} \in \cS$ where $v^{t+1}_i \neq 0, \forall i$. 
If a solution $\M^{t+1} \not\in \cS$ and $\exists i$ such that $v_i = 0$, then we cannot leverage Theorem\;\ref{thm:GDA2} to guarantee the existence of \textit{tightest possible} scalars $\{s_i\}^K_{i=1}$, \ie, scalars where
$\S = \text{diag}(s_1, \ldots, s_K)$ and $\B = \S \M^{t+1} \S^{-1}$ so that $\lambda^-_{\min}(\B) = \lambda_{\min}(\M^{t+1})$. 
This means we cannot guarantee solution $\M^{t+1}$ remains feasible in the next iteration, nor the non-increasing property of our iterative algorithm that is required to ensure local minimum convergence. 



\subsubsection{First Eigenvector Computation} 
\label{sssec:fec}
The remaining issue is how to best compute first eigenvector $\v^{t+1}$ given solution $\M^{t+1}$ repeatedly.
For this task, we employ \textit{Locally Optimal Block Preconditioned Conjugate Gradient} (LOBPCG) \cite{Knyazev01}, a fast algorithm in linear algebra known to compute extreme eigen-pairs efficiently,
with complexity $O(ab)$, where $a$ denotes the number of non-zero entries in $\M$ and $b$ denotes the number of iterations till convergence.
Because LOBPCG is itself iterative, it benefits from \textit{warm start}: algorithm converges much faster if a good solution initiates the iterations. 
In our case, we use previously computed eigenvector $\v^{t}$ as an initial solution to speed up LOBPCG when computing $\v^{t+1}$, reducing its complexity substantially.

\subsection{Frank-Wolfe Method}

\subsubsection{FW Step 1: Solving LP}

To solve \eqref{eq:optimize_diagonal2}, we employ the Frank-Wolfe (FW) method \cite{pmlr-v28-jaggi13}. 
The first FW step linearizes the objective $Q(\M)$ using its gradient $\nabla Q(\M^t)$ with respect to diagonal terms $\{M_{ii}\}$, computed using previous solution $\M^t$, \ie,
\begin{align}
\nabla Q(\M^t) = \left. \left[ \begin{array}{c}
\frac{\partial Q(\M)}{\partial M_{1,1}} \\
\vdots \\
\frac{\partial Q(\M)}{\partial M_{K,K}}
\end{array} \right] \right|_{\M^t} .
\label{eq:gradM}
\end{align}

Given gradient $\nabla Q(\M^t)$, optimization \eqref{eq:optimize_diagonal2} becomes a LP at each iteration $t$:
\begin{align}
\min_{\{M_{ii}\}} &
\mathrm{vec}(\{M_{ii}\})^\top ~\nabla Q(\M^t) 
\label{eq:fwlp} \\
\text{s.t.} & ~~ M_{ii} \geq \sum_{j \,|\, j \neq i} \left| \frac{s_i M_{ij}^t}{s_j} \right| + \rho, ~\forall i,
~~~\sum_{i} M_{ii} \leq C 
\nonumber
\end{align}
where $\mathrm{vec}(\{M_{ii}\}) = [M_{1,1} ~ M_{2,2} ~\ldots ~ M_{K,K}]^{\top}$ is a vector composed of diagonal terms $\{M_{ii}\}$, and $M_{ij}^t$ are off-diagonal terms of previous solution $\M^t$.
LP \eqref{eq:fwlp} can be solved efficiently using known fast algorithms such as Simplex \cite{co1998} and interior point method \cite{co2009}. 

\subsubsection{FW Step 2: Step Size Optimization}

The second FW step combines the newly computed solution $\{M^o_{ii}\}$ in step 1 with the previous solution $\{M^t_{ii}\}$ using step size $\gamma$, where $0 \leq \gamma \leq 1$:
\begin{align}
M^{t+1}_{ii} = M^t_{ii} + \gamma (M^o_{ii} - M^t_{ii}), ~~~
\forall i .
\end{align}
We compute the optimal step size $\gamma$ as follows. 
Define \textit{direction} $\{d_{ii} \}$ where $d_{ii} = M^o_{ii} - M^t_{ii}$.
We solve a one-dimensional optimization problem for step size $\gamma$:
\begin{align}
\min_{\gamma \,|\, 0 \leq \gamma \leq 1}
Q(\M^t + \gamma \, \text{diag}(\{ d_{ii} \})
\label{eq:StepSize0}
\end{align}
where $\text{diag}(\{d_{ii}\})$ is a diagonal matrix with $\{d_{ii}\}$ along its diagonal entries.
Define $\M^* = \M^t + \gamma \text{diag}(\{d_{ii}\})$.  
Using the chain rule for multivariate functions, we write
\begin{align}
\frac{\partial Q(\M^*)}{\partial \gamma} &= \sum_{i=1}^K
\frac{\partial Q(\M^*)}{\partial M^*_{ii}}
\frac{\partial M^*_{ii}}{\partial \gamma} 
\\
&= \sum_{i=1}^K
\frac{\partial Q(\M^*)}{\partial M^*_{ii}}
d_{ii}.
\label{eq:stepSize}
\end{align}
Substituting $M^*_{ii} = M^t_{ii} + \gamma d_{ii}$ into \eqref{eq:stepSize}, we can write $Q'(\gamma) = \frac{\partial Q(\M^*)}{\partial \gamma}$ as a function of $\gamma$ only. 

Since $Q(\M)$ is convex, one-dimensional $Q(\gamma)$ is also convex, and $Q'(\gamma^*) = 0$ at a unique $\gamma^*$.
In general, we cannot find $\gamma^*$ in closed form given an arbitrary $Q(\M)$. 
However, one can approximate $\gamma^*$ quickly given derived $Q'(\gamma)$ using any root-finding algorithm, such as the \textit{Newton-Raphson} (NR) method \cite{nrfwsso}.
Given the range restriction of $\gamma$ in \eqref{eq:StepSize0}, our proposed procedure to find step size $\gamma^t$ at iteration $t$ is thus the following:
\begin{enumerate}
\item Derive $Q'(\gamma)$ using \eqref{eq:stepSize} and compute minimizing $\gamma^*$ using NR.
If $Q'(\gamma)$ is a constant, then $\gamma^*$ is either 0 or 1, depending on the sign of $Q'(\gamma)$.
\item Compute appropriate step size $\gamma^t$ as follows:
\begin{align}
\gamma^t = \left\{ \begin{array}{ll}
1 & \mbox{if}~ \gamma^* > 1 \\
0 & \mbox{if}~ \gamma^* < 0 \\
\gamma^* & \mbox{o.w.}
\end{array} \right. .
\end{align}
\end{enumerate}
The updated solution from an FW iteration is then $\M^{t+1} = \M^t + \gamma^t \text{diag}(\{d_{ii}\})$.
FW step 1 and 2 are executed repeatedly until convergence.

\subsubsection{Comparing Frank-Wolfe and Proximal Gradient}

After replacing the PD cone constraint with a series of linear constraints per iteration---thus defining a (more restricted) convex feasible space $\cS^t$ that is a \textit{polytope}\footnote{A polytope is an intersection of finitely many half spaces \cite{co2009}.}---one can conceivably use \textit{proximal gradient} (PG) \cite{Parikh31} instead of FW to optimize $Q(\M)$.
PG alternately performs a gradient descent step followed by a proximal operator that is a projection back to $\cS^t$ until convergence \cite{hu2020feature}. 
First, FW is entirely projection-free, while PG requires one convex set projection per iteration.
More importantly, it is difficult in general to determine an ``optimal" step size for gradient descent in PG---one that makes the maximal progress without overshooting. 
In the literature \cite{Parikh31}, PG step size can be determined based on Lipschitz constant of $\nabla Q(\M)$, which is expensive to compute if the Hessian matrix $\nabla^2 Q(\M)$ is large.
In contrast for FW, after direction $\{d_{ii}\}$ is determined in step 1, the objective $Q(\gamma)$ becomes one-dimensional, and thus optimal step size $\gamma$ can be identified speedily using first- and second-order information $Q'(\gamma)$ and $Q''(\gamma)$.  
In our experiments, we show that our proposed FW-based optimization is faster than a previous PG-based method \cite{hu2020feature}.


\section{Optimizing Metric Off-diagonals}
\label{sec:learning2}
Including off-diagonal terms of metric $\mathbf{M}$ into the optimization is more complicated, since changing these terms may affect the balance and connectivity of the underlying graph.
Similar to previous matrix optimization algorithms like graphical lasso \cite{friedman2008sparse},
we design a \textit{block coordinate descent} (BCD) algorithm, which optimizes one row/column of off-diagonal terms \textit{plus} diagonal terms at a time while maintaining graph balance.

\subsection{Problem Formulation}
 
First, we divide $\mathbf{M}$ into four sub-matrices:
\begin{equation}
\mathbf{M} = \begin{bmatrix}
M_{1,1} & \mathbf{M}_{2,1}^{\top} \\
\mathbf{M}_{2,1} & \mathbf{M}_{2,2} 
\end{bmatrix},
\label{eq:submatrix}
\end{equation}
where $M_{1,1} \in \mathbb{R}$, $\M_{2,1} \in \mathbb{R}^{(K-1) \times 1}$ and $\mathbf{M}_{2,2} \in \mathbb{R}^{(K-1) \times (K-1)}$. 
We optimize $\M_{2,1}$ (\ie, $\{M_{j,1}\}, \forall j \neq 1$) and $\{M_{ii}\}$ in one iteration, \ie,
\begin{align}
\min_{\M_{2,1}, \{M_{ii}\}} ~ Q(\M), ~~~
\mbox{s.t.}~~ 
\left\{ 
\begin{array}{l}
\M \succ 0 \\
\M \in \cS \\
\sum_{i} M_{ii} \leq C
\end{array} \right. .
\label{eq:optimize_off}
\end{align}
In the next iteration, a different node is selected, and with appropriate row/column permutation, we still optimize the first column off-diagonals $\M_{2,1}$ as in \eqref{eq:optimize_off}.
For $\M$ to remain a graph metric, i) $\M$ must be PD, ii) $\M$ must be balanced, and iii) $\M$ must be irreducible. 

\subsection{Maintaining Graph Balance}

We maintain graph balance during off-diagonal optimization as follows.
Assuming graph $\cG$ from previous solution $\M^{t}$ is balanced, nodes $\cN$ were already colored into blue nodes $\cN_b$ and red nodes $\cN_r$.  
Suppose now node $1$ is a blue node in new solution $\M^{t+1}$. 
We thus constrain edge weights to other blue/red nodes to be positive/negative.
Combining these sign constraints with previously discussed GDPA linear constraints to replace the PD cone constraint, the optimization becomes:
\begin{align}
\min_{\M_{2,1},\{M_{ii}\}} ~& Q(\M), 
~ \mbox{s.t.} \left\{
\begin{array}{l} 
M_{i,i} \geq \sum_{j\,|\,j\neq i} \left|\frac{s^t_i M_{ij}}{s^t_j}\right| + \rho, ~~\forall i \\
M_{i,1} \leq 0, ~~\mbox{if}~~ i \in \cN_b \\
M_{i,1} \geq 0, ~~\mbox{if}~~ i \in \cN_r \\
\sum_{i} M_{ii} \leq C
\end{array} \right. 
\label{eq:optimize_off2}
\end{align}
Note that the sign for each $s^t_i M_{i,j} / s^t_j$ is known, given we know the scalar values $s^t_i$ as well as the sign of $M_{ij}$.
Thus, the absolute value operator can be appropriately removed, and the set of constraints remain linear.

Suppose instead that node $1$ is a red node in new solution $\M^{t+1}$. 
Then the two edge sign constraints in \eqref{eq:optimize_off2} are replaced by
\begin{align}
&M_{i,1} \geq 0, ~~\mbox{if}~~ i \in \cN_b 
\nonumber \\
&M_{i,1} \leq 0, ~~\mbox{if}~~ i \in \cN_r .
\nonumber
\end{align}
After optimizing \eqref{eq:optimize_off2} twice, each time assuming node $1$ is blue/red, we retain the better solution that yields the smaller objective $Q(\M)$. 
As an example, in Fig.\;\ref{fig:pseudo-bipartite} node 1's edges to other nodes are optimized assuming it is blue/red in the left/right graph. 
In each case, weight signs of edges stemming from node 1 are constrained so that the graph remains balanced. 

\eqref{eq:optimize_off2} also has a convex differentiable objective with a set of linear constraints. 
We thus employ the discussed FW method to compute a solution.

If the color of each node remains unchanged (and hence the sign constraint in each term $M_{i,j}$, $i \neq j$, is fixed), one can also optimize the entire matrix $\M$ at once in a similar formulation as  \eqref{eq:optimize_off2}.  
In practice, we first optimize one row /column in $\M$ at a time using \eqref{eq:optimize_off2} until the node colors stabilize, then optimize the whole matrix $\M$ with fixed colors until convergence.

\subsection{Disconnected Sub-Graphs}

The previous optimization assumes that the underlying graph $\cG$ corresponding to Laplacian $\M$ is irreducible.
When optimizing off-diagonal terms in $\M$ also, $\cG$ may become disconnected into $P$ separate sub-graphs $\cG_1, \ldots, \cG_P$, with corresponding Laplacians $\M_1, \ldots, \M_P$, where $\M = \text{diag}(\M_1, \ldots \M_P)$, \ie, $\M$ is block-diagonal. 
In this case, to compute scalars $\{s_i\}^K_{i=1}$ in 
\eqref{eq:optimize_off2}, we simply compute the first eigenvector $\v_p$ for each sub-matrix $\M_p$ using LOBPCG. 
Previously discussed optimization can then be used to optimize each $\M_p$ separately.




Given initial $\M^0$ with nodes appropriately assigned to blue and red sets, $\cN_b$ and $\cN_r$, as discussed in Section\;\ref{eq:subsubsec:initial}, we summarize our optimization framework called \textit{signed graph metric learning} (SGML) in Algorithm\;\ref{alg:SGML}.


\begin{algorithm}[htp]
\begin{small}
\caption{Signed Graph Metric Learning (SGML).}
\label{alg:SGML}
\textbf{Input}: initial $\M^0$, blue \& red node sets $\cN_b$ \& $\cN_r$. \\
\textbf{Output}: $\M^*$.
\begin{algorithmic}[1]
\State Compute scalars $\{s^t\}$ via LOBPCG.
\State \textbf{while} \textit{not converged} \textbf{do}
\State $~~~$\textbf{for} $j=1:K$
\State $~~~~~~$ Assume $j \in \cN_b$.
\State $~~~~~~$ Solve \eqref{eq:optimize_off2} via FW for $\{M_{ij}\}$, $i \neq j$, and $\{M_{ii}\}$, $\forall i$.
\State $~~~~~~$ Assume $j \in \cN_r$.
\State $~~~~~~$ Solve \eqref{eq:optimize_off2} via FW for $\{M_{ij}\}$, $i \neq j$, and $\{M_{ii}\}$, $\forall i$.
\State $~~~~~~$ Choose the better of two previous solutions.
\State $~~~~~~$ Update $\cN_b$ and $\cN_r$.
\State $~~~~~~$ Update scalars $\{s^t\}$ via LOBPCG. 
\State $~~~$ \textbf{end for}
\State \textbf{end while}
\State \textbf{while} \textit{not converged} \textbf{do}
\State $~~~$ Solve \eqref{eq:optimize_off2} via FW for $\M$ while fixing $\cN_b$ and $\cN_r$.
\State $~~~$ Update scalars $\{s^t\}$ via LOBPCG. 
\State \textbf{end while}
\State \textbf{return} $\M^*$.
\end{algorithmic}
\end{small}
\end{algorithm}

\subsection{Local Optimality of Solution}

Our algorithm converges to a local minimum that may not be a global minimum because our search space $\cS$ is not a convex set.
Consider a convex combination $\M' = \alpha \M_1 + (1-\alpha) \M_2$, where $0 < \alpha < 1$, of two graph metric matrices $\M_1, \M_2 \in \cS$ corresponding to two balanced signed graphs $\cG_1$ and $\cG_2$, where  their edge signs are not the same.
In general, signed graph $\cG'$ associated with $\M'$ may not be balanced, and thus $\M' \not\in \cS$. 

However, matrices $\M \in \cS$ of the same edge signs do form a convex set.
Thus, $\cS$ is a union of convex sets of graph metric matrices of the same signs, and $\cS$ is locally convex. 
We will show in Section\;\ref{sec:results} that using initialization in Section\;\ref{eq:subsubsec:initial}, our algorithm achieved competitive objective function values for 17 different datasets.

\section{Experiments}
\label{sec:results}

We first show that a strong low-rank assumption on metric $\M$ such as \cite{10.1145/2184319.2184343} is not always desirable and can worsen the objective $Q(\M)$ unnecessarily.
We then compare our SGML optimization framework against other general optimization schemes in terms of: 
1) converged objective values and running time for various convex and differentiable objectives $Q(\M)$'s, and
2) performance in binary classification.

\subsection{Comparison with Low-Rank Prior}

\begin{table}[]
\begin{center}
\begin{small}
\caption{LMNN objective $Q(\M^*)$ by minimizing $Q(\M)+\tau||\M||_*$ using PD-cone, and objective $Q(\M)$ by minimizing $Q(\M)$ directly using SGML, for
\textit{Sonar} dataset ($60$ original features and $10$ PCA-transformed features).}
\label{tab:low_rank_experiments_before_after_PCA}
\begin{tabular}{ccccc}
\hline
\begin{tabular}[c]{@{}c@{}}schemes\\      features\end{tabular} & $\tau$ & obj. & rank & time (s)\\ \hline
 \multirow{5}{*}{\begin{tabular}[c]{@{}c@{}}PD-cone\\      original\end{tabular}} & 0 & 4.36E+01 & 35 & 1.23E+00 \\
 & 1.00E-03 & 6.53E+01 & 13 & 2.04E+00 \\
 & 2.00E-03 & 1.62E+02 & 8 & 2.54E+00  \\
 & 3.00E-03 & 3.39E+02 & 6 & 3.17E+00  \\
 & 4.00E-03 & 6.29E+02 & 5 & 3.81E+00  \\ \hline 
 \multirow{5}{*}{\begin{tabular}[c]{@{}c@{}}PD-cone\\      PCA\end{tabular}} & 0 & 4.19E+02 & 10 & 9.67E-02 \\
  & 1.00E-03 & 4.61E+02 & 10 & 1.09E+00  \\
  & 2.00E-03 & 1.75E+03 & 4 & 1.73E+00  \\
  & 3.00E-03 & 2.75E+03 & 0 & 2.03E+00  \\
  & 4.00E-03 & 2.77E+03 & 0 & 1.58E+00 \\ \hline
 \begin{tabular}[c]{@{}c@{}}SGML\\PCA\end{tabular}& - & 5.49E+02 & 10 & 2.04E-01
 \\ \hline
\end{tabular}
\end{small}
\end{center}
\end{table}

Instead of promoting good solutions, we first show that a low-rank prior added to an objective $Q(\M)$ in LMNN \cite{weinNNberger09LMNN} can worsen the solution quality noticeably.
Specifically, we added a weighted \textit{nuclear norm} $\|\M\|_*$ (the sum of singular values) to $Q(\M)$, which is the convexification of the rank of matrix $\M$ \cite{10.1145/2184319.2184343},  before optimization.
A low-rank matrix implies that there exist redundant features that are linear combinations of other features; a low-rank prior does not promote a diagonal-only metric matrix (which has full rank). 
Thus, in a scenario where the features $\f_i \in \mathbb{R}^K$ are not redundant, a low-rank prior would perform poorly. 

We see in Table\;\ref{tab:low_rank_experiments_before_after_PCA} that for dataset \textit{Sonar} with only $60$ features with little redundancy, increasing the weight $\tau$ of the nuclear norm $\|\M\|_*$ worsened the resulting objective $Q(\M^*)$ of the computed optimal solution $\M^*$. 
When the feature dimension was reduced via PCA (often done to reduce complexity in subsequent steps \cite{Wright-Ma-2021}), we see that the resulting objective worsened even faster as the nuclear norm weight increased. 
This shows that a low-rank prior making a strong assumption on feature redundancy is not always suitable.

In contrast, our SGML framework method makes no assumption on feature redundancy.
Further, unlike the low-rank prior that requires singular value decomposition and soft-thresholding of singular values per iteration for the \textit{proximal operator} \cite{Parikh31} of the nuclear norm, SGML requires only computation of the smallest eigen-pair $(\lambda_{\min}, \v)$ per iteration via LOBPCG, resulting in significant speedup.

\subsection{Comparison with Optimization Schemes}
\label{ssec:against_competing_opt}

We first compare computed objective values using SGML against three general optimization schemes:
1) standard gradient descent with projection onto a PD cone for full $\M$ optimization,
2) a recent metric learning scheme using block coordinate descent with proximal gradient (PG), adopting restricted search spaces that are intersections of half spaces, boxes and norm balls (HBNB) \cite{hu2020feature}, and
3) our previous work that is also based on GDPA but within a positive graph metric space (PGML) \cite{yang20}.

\begin{table*}[htb]
\begin{center}
\caption{Tested convex and (partially) differentiable objective functions $Q(\M)$'s. $d_{\M}(i,j)=\Delta\f_{ij}^{\top} \M \Delta\f_{ij}$.}
\label{tab:tested_Qms}
\begin{scriptsize}
\begin{tabular}{|c|c|c|c|c|}
\hline
MCML & DEML  & LSML & LMNN & GLR \\ \hline
\begin{tabular}[c]{@{}c@{}}
$\displaystyle{\sum_{i,j:y_{j}=y_{i}}d_{\M}(i,j)}$ \\ $\displaystyle{+\sum_{i}\log\sum_{k\neq i}\exp\left\{-d_{\M}(i,k)\right\}}$
\end{tabular} 
&
$\displaystyle{\sum_{\f_{i},\f_{j}\in\mathcal{D}}  \sqrt{d_{\M}(i,j)} }$ 
&
\begin{tabular}[c]{@{}c@{}} 
$\displaystyle{\sqrt{d_{\M}(a,b)}>\sqrt{d_{\M}(c,d)}.}$
\\
$\displaystyle{\sum_{\f_{a},\f_{b}\in\mathcal{S},\f_{c},\f_{d}\in\mathcal{D}} \Big(\sqrt{d_{\M}(a,b)}}$
\\ 
$\displaystyle{-\sqrt{d_{\M}(c,d)}\Big)^2 }$
\end{tabular}
& 
\begin{tabular}[c]{@{}c@{}c@{}} 
$\displaystyle{(1-\mu)\sum_{i,j\rightsquigarrow i}  d_{\M}(i,j)}$
\\ 
$\displaystyle{+\mu\sum_{i,j\rightsquigarrow i}\sum_{l}(1-y_{il})\Big[1}$
\\
$\displaystyle{+d_{\M}(i,j)-d_{\M}(i,l)\Big]_{+}}$
\end{tabular}
&
$\displaystyle{\sum_{i,j} \exp \left\{ -d_{\M}(i,j) \right\} (z_i - z_j)^2}$ 
\\ \hline
\end{tabular}
\end{scriptsize}
\end{center}
\end{table*}

\begin{table}[htb]
\begin{center}
\caption{Optimization parameters and convergence thresholds. GD=gradient descent.}
\label{tab:optimizaiton_set}
\begin{scriptsize}
\begin{tabular}{|c|c|c|c|c|} 
\hline
scheme & PD-cone & HBNB & PGML & SGML \\ 
\hline
\multicolumn{5}{|c|}{\textbf{ optimization parameters }} \\ 
\hline
trace constraint $C$ & \multicolumn{4}{c|}{$K$} \\ 
\hline
linear constraint $\rho$ & \multicolumn{2}{c|}{-} & \multicolumn{2}{c|}{0} \\ 
\hline
GD step size & \multicolumn{2}{c|}{\begin{tabular}[c]{@{}c@{}}$t_0=0.1/N.$\\$t_{k}=1.01t_{k-1},$\\if GD yields better obj.\\$t_{k}=t_{k-1}/2,\mbox{o.w.}$~~\end{tabular}} & \multicolumn{2}{c|}{-} \\ 
\hline
\multicolumn{5}{|c|}{\textbf{ convergence thresholds }} \\ 
\hline
main tol. & \multicolumn{4}{c|}{1.00E-05} \\ 
\hline
max main iter. & \multicolumn{4}{c|}{1.00E+03} \\ 
\hline
dia/offdia tol. & \multirow{4}{*}{-} & \multicolumn{3}{c|}{1.00E-03} \\ 
\cline{1-1}\cline{3-5}
max dia/offdia/FW iter. &  & \multicolumn{3}{c|}{1.00E+03} \\ 
\cline{1-1}\cline{3-5}
LOBPCG tol. &  & \multicolumn{3}{c|}{1.00E-04} \\ 
\cline{1-1}\cline{3-5}
max LOBPCG iter. &  & \multicolumn{3}{c|}{2.00E+02} \\ 
\hline
LP optimality tol. & \multicolumn{2}{c|}{\multirow{3}{*}{-}} & \multicolumn{2}{c|}{1.00E-02} \\ 
\cline{1-1}\cline{4-5}
LP interior-point tol. & \multicolumn{2}{c|}{} & \multicolumn{2}{c|}{1.00E-04} \\ 
\cline{1-1}\cline{4-5}
FW step size NR tol. & \multicolumn{2}{c|}{} & \multicolumn{2}{c|} {5.00E-01} \\
\hline
\end{tabular}
\end{scriptsize}
\end{center}
\end{table}

We evaluated PD-cone, HBNB, PGML, and SGML on the following convex and (partially) differentiable $Q(\M)$'s for $\M \succ 0$ (also in Table\;\ref{tab:tested_Qms}):
\begin{enumerate}
\item Maximally collapsing metric learning (MCML) \cite{globerson2006metric}.
\item Seminal distance metric learning (DEML) \cite{ericdml}.
For the sake of comparison without losing validity of the optimization results, we relax the constraint $\sum_{\f_{i},\f_{j}\in\mathcal{S}}  \Delta\f_{ij}^{\top} \M \Delta\f_{ij}\leq c, c>0$ ($\mathcal{S}$ denotes the set of sample pairs that have the same labels) since solving this constrained problem (solving a sparse system of linear equations) may result in $\M$ not being PD. 
See \cite{ericdml} for details.
\item Least squared-residual metric learning (LSML) \cite{LSML}. 
We set the distance weights to be all 1's and no prior metric matrix is given.
\item Large margin nearest neighbor (LMNN) \cite{weinNNberger09LMNN}. 
Note that the objective function is piecewise linear.
\item Graph Laplacian regularizer (GLR) \cite{8347162,pang2017graph}:
\begin{equation}
\sum_{i=1}^{N} \sum_{j=1}^{N} \exp \left\{ -(\f_i-\f_j)^{\top} \M (\f_i - \f_j) \right\} (z_i - z_j)^2.
\label{eq:GLR}
\end{equation}
A small GLR means that signal $\z$ at connected node pairs $(z_i, z_j)$ is similar for a large edge weight $ w_{ij} $, \ie, $\z$ is \textit{smooth} with respect to a graph $\cG$ with edge weights $w_{ij} = \exp \left\{ -(\f_i-\f_j)^{\top} \M (\f_i - \f_j) \right\}$. 
\end{enumerate}

We evaluated PD-cone, HBNB, PGML, and SGML using 17 datasets, including 14 out of 17 in \cite{classificationpami19}, \textit{Sonar} with 60 features, \textit{Madelon} with 500 features, and \textit{Colon-cancer} with 2000 features, all of which are binary datasets available in UCI\footnote{\url{https://archive.ics.uci.edu/ml/datasets.php}} and LibSVM\footnote{\url{https://www.csie.ntu.edu.tw/~cjlin/libsvmtools/datasets/binary.html}}.
For each optimization scheme, we randomly split (with random seed 0) a dataset into $T=\mbox{round}(N/4)$ folds, ran optimization on each fold and took the average of the converged objective values.
We ran similar experiments on datasets \textit{Madelon} and \textit{Colon-cancer}, except that we only ran the first 10 out of $T$ folds of the data and took the average. 
We applied the same data normalization scheme in \cite{classificationpami19} that 
1) subtracts the mean and divides by the standard deviation feature-wise, and
2) normalizes to unit length sample-wise.
We added $10^{-12}$ noise 
to the dataset to avoid NaN's due to data normalization on small samples.


The optimization parameters and convergence thresholds \cite{Bertsekas/99} of PD-cone, HBNB, PGML and SGML are listed in Table \ref{tab:optimizaiton_set}.
Finding a step size for PG based on Lipschitz constant for \textit{Madelon} and \textit{Colon-cancer} is computationally infeasible in a consumer-level machine, where Hessian $\nabla^2 Q(\M)$'s have $500^4$ and $2000^4$ entries, respectively.
Thus, as done in \cite{weinNNberger09LMNN}, the step size of gradient descent (GD) for PD-cone and HBNB was heuristically initialized as $0.1/N$, increased by 1\% if GD yielded a better objective value, and decreased by half otherwise.
For PGML and SGML, we solved LP's using Matlab linprog in Steps 5 and 7 of Algorithm \ref{alg:SGML} and Gurobi Matlab interface\footnote{\url{https://www.gurobi.com/documentation/9.0/examples/linprog_m.html}} in Step 14, both of which employ the interior-point method for solution \cite{co2009}.

As shown in Tables \ref{tab:objtime} and \ref{tab:objtime_c}, 
SGML achieved the smallest (for MCML, LSML, LMNN and GLR minimization problems) and the largest (for DEML maximization problem) averaged objective values compared to HBNB and PGML, \textit{i.e.}, the closest objective values compared to high-complexity baseline PD-cone.
As shown in the last column of Tables \ref{tab:objtime} and \ref{tab:objtime_c}, SGML performed overall better than HBNB and PGML, both of which have more restrictive search space, resulting in sub-optimal solutions for MCML, DEML, LSML and LMNN objectives.
LSML objective contains boolean expressions and LMNN objective is piecewise linear, and thus they are not differentiable everywhere; SGML still achieved highly competitive objective values compared to HBNB and PGML for LSML. 
The difference between the largest and smallest eigenvalues of the underlying $\M$ for LMNN and GLR might be smaller than MCML, DEML and LSML, which makes the norm-ball projection in HBNB particularly suitable for LMNN and GLR, resulting in competitive objective values in 11 and 10 out of 17 datasets against PGML and SGML, respectively.


All four optimization schemes were implemented in Matlab\footnote{code available: \url{https://github.com/bobchengyang/SGML}}.
Fig. \ref{fig:madelon_time} and \ref{fig:coloncancer_time} show the total running time, the running time for the highest time-complexity components of PD-cone, HBNB and SGML, and the speed gain of SGML over PD-cone, on datasets \textit{Madelon} and \textit{Colon-cancer}. 
Both figures show that 1) eigen-decomposition for PD-cone on large matrices entailed high computation complexity,
2) it often took large numbers of iterations for PD-cone and HBNB to converge using a heuristic gradient descent step size selection (see Table \ref{tab:optimizaiton_set}), 
3) SGML/HBNB benefited from LOBPCG for fast first eigenpair computation, and 
4) SGML in addition benefited from the empirically observed sparsity 
of computed $\M$ ($a\ll K^2$ for LOBPCG complexity $\mathcal{O}(ab)$) and FW step size optimization, and thus converged much faster than PD-cone and HBNB.
In particular, SGML was 8.28x and 5.56x faster than PD-cone on \textit{Madelon} with MCML and GLR objectives, respectively, as shown in Fig. \ref{fig:madelon_time},
and 1.97x, 2.42x and 3.10x faster than PD-cone on \textit{Colon-cancer} with MCML, LSML and GLR objectives, respectively, as shown in Fig. \ref{fig:coloncancer_time}.
Our proposed SGML on \textit{Madelon} was slower than PD-cone (see Fig. \ref{fig:madelon_time}), which is due to the potential large number of FW iterations during the BCD process.
However, the highest time-complexity component LOBPCG in SGML still occupied only a very small portion of the total running time.

\subsection{Binary Classification}

Further, we evaluated SGML against competing methods on binary classification using the same 14 binary datasets in \cite{classificationpami19} and \textit{Sonar}.
Specifically, we optimized four out of five objective functions in Table\;\ref{tab:tested_Qms}, MCML, DEML, LMNN and GLR, using different optimization schemes, then built a 10-nearest neighbor classifier implemented by authors of Information-Theoretic Metric Learning\footnote{\url{http://www.cs.utexas.edu/users/pjain/itml/download/itml-1.2.tar.gz}}.
We applied the same data normalization scheme in \cite{classificationpami19} as in Section\;\ref{ssec:against_competing_opt}.
We created 10 instances of 90\% training---10\% test split with random seeds 0-9, \textit{i.e.}, 10-fold random cross validation \cite{10.5555/1671238}, and computed the average accuracy.
We compared SGML against PD-cone, HBNB and PGML, in terms of the average classification accuracy of all 15 tested datasets. 
Tables \ref{tab:binaryclassification} and \ref{tab:binaryclassification_c} show that, on average, our SGML achieved better classification accuracy than our previous PGML for all four objective functions.
Furthermore, our SGML achieved comparable classification accuracy against PD-cone and HBNB, while SGML has a much lower computation complexity compared to both PD-cone and HBNB.

\begin{table*}[]
\begin{center}
\caption{Converged objective values. 
All problems minimize $Q(\M)$'s except DEML. 
Best objective values in bold (excluding PD-cone). 
Avg. of $T=\mbox{round}(N/4)$ runs.
Data is split into $T$ folds and run $T$ times.
Timed experiments on \textit{madelon} and \textit{colon-cancer} are run on the first 10 out of $T$ folds and then take the avg.
Machine spec: AMD Ryzen Threadripper 3960X 24-core processor 3.80 GHz Windows 10 64bit 128GB of RAM.}
\label{tab:objtime}
\begin{scriptsize}
\begin{tabular}{ccccccccccc}
\hline
\multirow{2}{*}{$Q(\M)$} & \multirow{2}{*}{\begin{tabular}[c]{@{}c@{}}dataset\\ $(N,K)$\end{tabular}} & \multirow{2}{*}{\begin{tabular}[c]{@{}c@{}}Australian\\ (690,14)\end{tabular}} & \multirow{2}{*}{\begin{tabular}[c]{@{}c@{}}Breastcancer\\ (683,10)\end{tabular}} & \multirow{2}{*}{\begin{tabular}[c]{@{}c@{}}Diabetes\\ (768,8)\end{tabular}} & \multirow{2}{*}{\begin{tabular}[c]{@{}c@{}}Fourclass\\ (862,2)\end{tabular}} & \multirow{2}{*}{\begin{tabular}[c]{@{}c@{}}German\\ (1000,24)\end{tabular}} & \multirow{2}{*}{\begin{tabular}[c]{@{}c@{}}Haberman\\ (206,3)\end{tabular}} & \multirow{2}{*}{\begin{tabular}[c]{@{}c@{}}Heart\\ (270,13)\end{tabular}} & \multirow{2}{*}{\begin{tabular}[c]{@{}c@{}}ILPD\\ (583,10)\end{tabular}} & \multirow{2}{*}{\begin{tabular}[c]{@{}c@{}}Liverdisorders\\ (345,6)\end{tabular}} \\
 &  &  &  &  &  &  &  &  &  &  \\ \hline
\multirow{4}{*}{MCML} & PD-cone & 5.21E-03 & 1.17E-02 & 3.46E-02 & 9.11E-01 & 1.12E-02 & 2.86E-01 & 2.62E-03 & 3.78E-02 & 1.85E-01 \\
 & HBNB & 3.54E-01 & 9.37E-02 & 5.79E-01 & 1.07E+00 & 2.13E-01 & 4.77E-01 & 3.57E-01 & 2.86E-01 & 8.96E-01 \\
 & PGML & 3.06E-01 & 1.73E-01 & 6.75E-01 & 1.24E+00 & 2.04E-01 & 6.91E-01 & \textbf{1.76E-01} & 4.72E-01 & 9.26E-01 \\
 & SGML & \textbf{2.03E-01} & \textbf{4.98E-02} & \textbf{5.03E-01} & \textbf{1.02E+00} & \textbf{1.67E-01} & \textbf{4.45E-01} & 1.90E-01 & \textbf{2.68E-01} & \textbf{7.02E-01} \\
  \hline
\multirow{4}{*}{DEML} & PD-cone & 1.62E+01 & 1.75E+01 & 1.16E+01 & 6.63E+00 & 1.80E+01 & 6.47E+00 & 1.57E+01 & 1.14E+01 & 9.64E+00 \\
 & HBNB & 8.81E+00 & 9.06E+00 & 8.09E+00 & 6.87E+00 & 7.56E+00 & 6.26E+00 & 8.80E+00 & 7.27E+00 & 8.04E+00 \\
 & PGML & 9.62E+00 & 8.03E+00 & 8.29E+00 & 6.75E+00 & \textbf{9.53E+00} & 6.42E+00 & 9.81E+00 & 8.02E+00 & 8.01E+00 \\
 & SGML & \textbf{9.91E+00} & \textbf{9.67E+00} & \textbf{8.86E+00} & \textbf{6.95E+00} & 9.11E+00 & \textbf{6.59E+00} & \textbf{9.89E+00} & \textbf{8.23E+00} & \textbf{8.36E+00} \\
  \hline
\multirow{4}{*}{LSML} & PD-cone & 8.56E-03 & 1.25E-03 & 5.68E-03 & 4.73E-01 & 1.71E-02 & 7.54E-02 & 7.57E-03 & 1.43E-02 & 8.80E-03 \\
 & HBNB & 2.32E-02 & \textbf{1.03E-03} & \textbf{2.40E-02} & 5.20E-01 & 3.27E-02 & 1.24E-01 & 1.49E-02 & 6.09E-02 & 6.08E-02 \\
 & PGML & 1.57E-01 & 1.30E-02 & 3.30E-01 & 1.96E+00 & 1.21E-01 & 9.40E-01 & 1.36E-01 & 4.59E-01 & 4.45E-01 \\
 & SGML & \textbf{6.01E-03} & 1.27E-03 & 2.51E-02 & \textbf{4.58E-02} & \textbf{3.42E-03} & \textbf{9.99E-02} & \textbf{4.02E-03} & \textbf{1.90E-02} & \textbf{3.90E-02} \\
  \hline
\multirow{4}{*}{LMNN} & PD-cone & 7.33E+00 & 6.72E+00 & 6.55E+00 & 8.64E+00 & 6.21E+00 & 6.49E+00 & 7.38E+00 & 6.12E+00 & 6.89E+00 \\
 & HBNB & \textbf{9.17E+00} & \textbf{8.10E+00} & 9.69E+00 & \textbf{9.45E+00} & 7.93E+00 & \textbf{8.62E+00} & 9.50E+00 & \textbf{8.54E+00} & \textbf{9.69E+00} \\
  & PGML & 1.08E+01 & 8.75E+00 & 9.41E+00 & 9.81E+00 & 9.70E+00 & 8.40E+00 & 1.07E+01 & 9.17E+00 & 9.86E+00 \\
 & SGML & 9.21E+00 & 8.37E+00 & \textbf{9.29E+00} & 9.75E+00 & \textbf{7.75E+00} & 8.63E+00 & \textbf{9.18E+00} & 8.55E+00 & 1.01E+01 \\
 \hline
\multirow{4}{*}{GLR} & PD-cone & 3.50E-03 & 1.13E-03 & 3.79E-02 & 1.66E+00 & 1.04E-03 & 6.29E-01 & 1.34E-03 & 2.58E-02 & 2.15E-01 \\
 & HBNB & \textbf{1.63E-01} & \textbf{5.88E-02} & \textbf{2.28E-01} & \textbf{1.57E+00} & 1.28E-01 & 6.78E-01 & \textbf{1.36E-01} & \textbf{2.02E-01} & \textbf{4.13E-01} \\
 & PGML & 4.31E-01 & 3.64E-01 & 5.55E-01 & 1.67E+00 & 1.51E-01 & 7.58E-01 & 4.17E-01 & 3.62E-01 & 6.63E-01 \\ 
 & SGML & 1.86E-01 & 8.17E-02 & 3.47E-01 & 1.60E+00 & \textbf{8.76E-02} & \textbf{6.52E-01} & 2.00E-01 & 2.33E-01 & 4.86E-01 \\
 \hline
\end{tabular}
\end{scriptsize}
\end{center}
\end{table*}


\begin{table*}[]
\begin{center}
\caption{Continuation of Table \ref{tab:objtime} on other datasets. 
Best objective values in bold (excluding PD-cone). 
All problems minimize $Q(\M)$'s except DEML.
The last second column shows the amplitude of HBNB, PGML and SGML over PD-cone.
The last column shows the number of best values for each optimization scheme (excluding PD-cone).}
\label{tab:objtime_c}
\begin{scriptsize}
\begin{tabular}{cccccccccccc} 
\hline
\multirow{2}{*}{$Q(\M)$} & \multirow{2}{*}{\begin{tabular}[c]{@{}c@{}}dataset\\ $(N,K)$\end{tabular}} & \multirow{2}{*}{\begin{tabular}[c]{@{}c@{}}Monk1\\ (556,6)\end{tabular}} & \multirow{2}{*}{\begin{tabular}[c]{@{}c@{}}Pima\\ (768,8)\end{tabular}} & \multirow{2}{*}{\begin{tabular}[c]{@{}c@{}}Planning\\ (182,12)\end{tabular}} & \multirow{2}{*}{\begin{tabular}[c]{@{}c@{}}Voting\\ (435,16)\end{tabular}} & \multirow{2}{*}{\begin{tabular}[c]{@{}c@{}}WDBC\\ (569,30)\end{tabular}} & \multirow{2}{*}{\begin{tabular}[c]{@{}c@{}}Sonar\\ (208,60)\end{tabular}} & \multirow{2}{*}{\begin{tabular}[c]{@{}c@{}}madelon\\ (2600,500)\end{tabular}} & \multirow{2}{*}{\begin{tabular}[c]{@{}c@{}}colon-cancer\\ (62,2000)\end{tabular}} & \multirow{2}{*}{\begin{tabular}[c]{@{}c@{}}over\\ PD-cone\end{tabular}} & \multirow{2}{*}{\begin{tabular}[c]{@{}c@{}}\# of\\ best\end{tabular}} \\
 &  &  &  &  &  &  &  &  &  &  &  \\ \hline
\multirow{4}{*}{MCML} & PD-cone & 1.12E-01 & 3.95E-02 & 3.52E-02 & 1.01E-02 & 9.34E-03 & 2.14E-03 & 9.95E-04 & 6.01E-03 & - & - \\
 & HBNB & 1.15E+00 & 5.41E-01 & 2.84E-01 & 1.07E-01 & 1.43E-01 & 4.36E-01 & 4.42E-01 & 4.50E-01 & 6.14E+01 & 0 \\
 & PGML & 1.04E+00 & 7.05E-01 & 4.40E-01 & 7.05E-02 & 2.75E-01 & \textbf{1.40E-01} & \textbf{4.15E-03} & 3.14E-02 & 2.06E+01 & 3 \\
 & SGML & \textbf{9.87E-01} & \textbf{5.02E-01} & \textbf{2.06E-01} & \textbf{5.14E-02} & \textbf{8.19E-02} & 2.27E-01 & 2.47E-02 & \textbf{2.01E-02} & \textbf{1.97E+01} & \textbf{14} \\
  \hline
\multirow{4}{*}{DEML} & PD-cone & 1.04E+01 & 1.15E+01 & 1.24E+01 & 1.94E+01 & 2.68E+01 & 3.17E+01 & 8.92E+01 & 1.56E+02 & - & - \\
 & HBNB & 8.30E+00 & 8.00E+00 & 7.17E+00 & 9.04E+00 & 8.79E+00 & 8.46E+00 & 8.30E+00 & 6.83E+00 & 5.58E-01 & 0 \\
 & PGML & 8.67E+00 & 8.31E+00 & \textbf{8.39E+00} & \textbf{1.06E+01} & 9.13E+00 & \textbf{1.06E+01} & \textbf{1.13E+01} & 9.29E+00 & 5.95E-01 & 5 \\
 & SGML & \textbf{8.76E+00} & \textbf{8.69E+00} & 8.11E+00 & 1.02E+01 & \textbf{1.01E+01} & 1.04E+01 & 1.11E+01 & \textbf{9.31E+00} & \textbf{6.11E-01} & \textbf{12} \\
  \hline
\multirow{4}{*}{LSML} & PD-cone & 3.23E-03 & 4.33E-03 & 5.07E-03 & 3.10E-03 & 2.09E-03 & 6.86E-03 & 2.34E-03 & 1.20E-02 & - & - \\
 & HBNB & \textbf{2.83E-02} & \textbf{2.63E-02} & 1.87E-02 & 9.27E-03 & 9.60E-03 & 1.71E-02 & 6.60E-03 & 3.50E-02 & 3.52E+00 & 4 \\
 & PGML & 5.10E-01 & 4.50E-01 & 4.24E-01 & 1.49E-01 & 5.73E-02 & 1.44E-01 & 9.07E-03 & 3.54E-02 & 3.88E+01 & 0 \\ 
 & SGML & 4.03E-02 & 2.89E-02 & \textbf{1.06E-02} & \textbf{3.53E-03} & \textbf{1.78E-03} & \textbf{2.89E-03} & \textbf{7.58E-05} & \textbf{5.25E-03} & \textbf{2.25E+00} & \textbf{13} \\
 \hline
\multirow{4}{*}{LMNN} & PD-cone & 8.09E+00 & 6.59E+00 & 6.06E+00 & 6.97E+00 & 6.92E+00 & 8.20E+00 & 7.98E+00 & 6.50E+00 & - & - \\
 & HBNB & \textbf{1.06E+01} & \textbf{9.57E+00} & \textbf{8.14E+00} & \textbf{8.35E+00} & \textbf{8.36E+00} & 9.84E+00 & 1.04E+01 & 8.70E+00 & 1.30E+00 & \textbf{11} \\
  & PGML & 1.12E+01 & 9.92E+00 & 8.36E+00 & 1.08E+01 & 9.86E+00 & 1.27E+01 & 9.55E+00 & 1.09E+01 & 1.43E+00 & 0 \\ 
 & SGML & 1.07E+01 & 9.77E+00 & \textbf{8.14E+00} & 8.77E+00 & 8.40E+00 & \textbf{9.64E+00} & \textbf{8.68E+00} & \textbf{7.22E+00} & \textbf{1.28E+00} & 7 \\
\hline
\multirow{4}{*}{GLR} & PD-cone & 1.19E-01 & 4.08E-02 & 1.90E-03 & 1.15E-03 & 1.00E-03 & 1.00E-03 & 9.88E-04 & 9.75E-04 & - & - \\
 & HBNB & \textbf{3.70E-01} & \textbf{2.58E-01} & \textbf{1.61E-01} & 8.66E-02 & 9.45E-02 & 2.62E-01 & 2.28E-01 & 1.08E-01 & 7.11E+01 & \textbf{10} \\
  & PGML & 7.48E-01 & 5.49E-01 & 2.49E-01 & \textbf{5.74E-02} & 4.28E-01 & 1.89E-01 & \textbf{8.62E-03} & \textbf{8.15E-03} & 1.04E+02 & 3 \\
 & SGML & 5.62E-01 & 3.54E-01 & 1.83E-01 & 9.85E-02 & \textbf{6.98E-02} & \textbf{1.40E-01} & 5.83E-02 & 4.54E-02 & \textbf{5.25E+01} & 4 \\
 \hline
\end{tabular}
\end{scriptsize}
\end{center}
\end{table*}

\begin{figure}
\begin{center}
\includegraphics[width=3.5in,trim=1.1in 1.1in 1in 1.2in, clip]{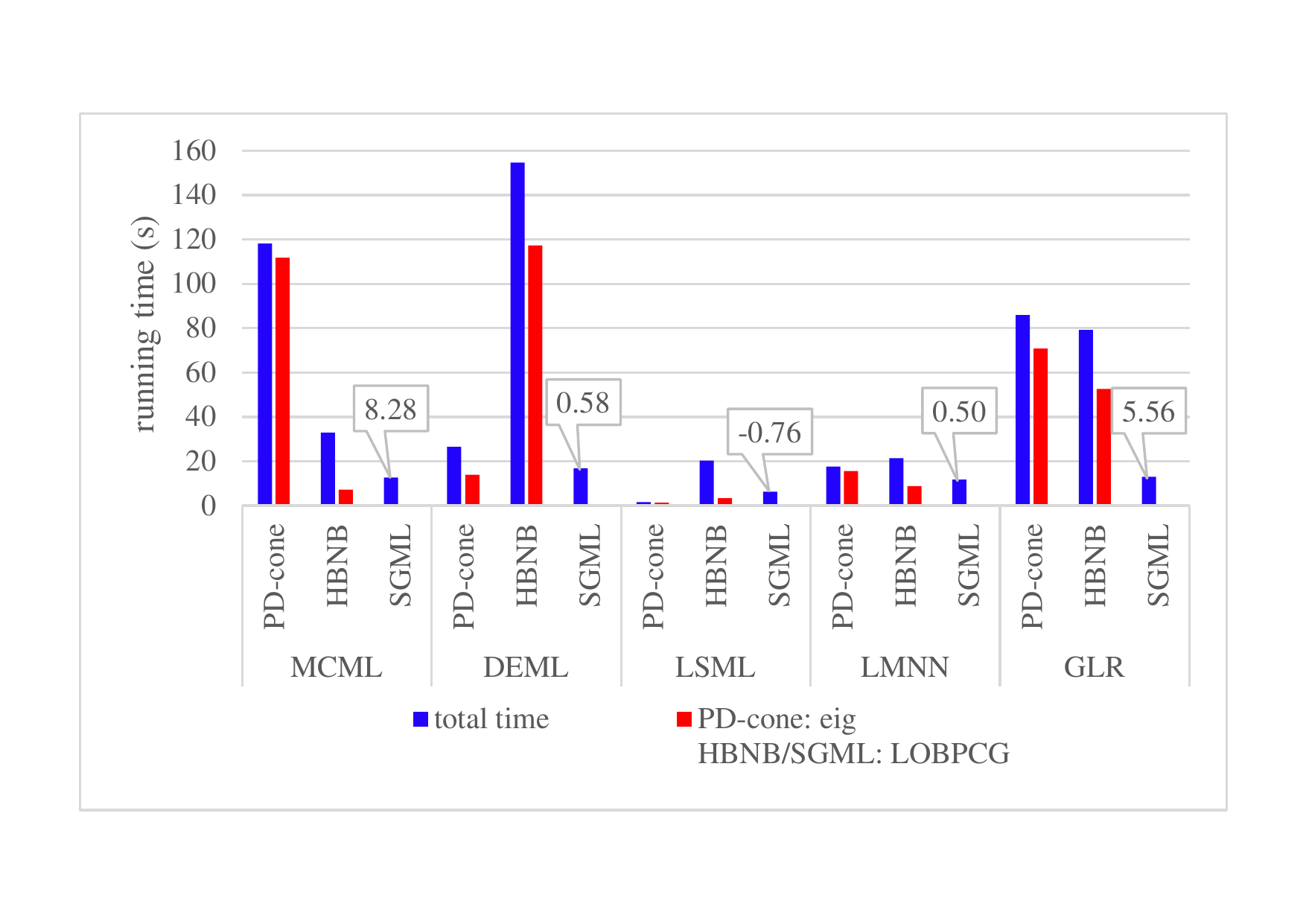}
\vspace{-0.3in}
\caption{Running time on \textit{Madelon} of optimization methods PD-cone, HBNB, and SGML on objective functions MCML, DEML, LSML, LMNN and GLR.
Labelled numbers denote the speed gain (faster (positive) or slower (negative)) of SGML over PD-cone.}
\label{fig:madelon_time}
\end{center}
\end{figure}

\begin{figure}
\begin{center}
\includegraphics[width=3.5in,trim=1in 1.1in 1in 1.2in, clip]{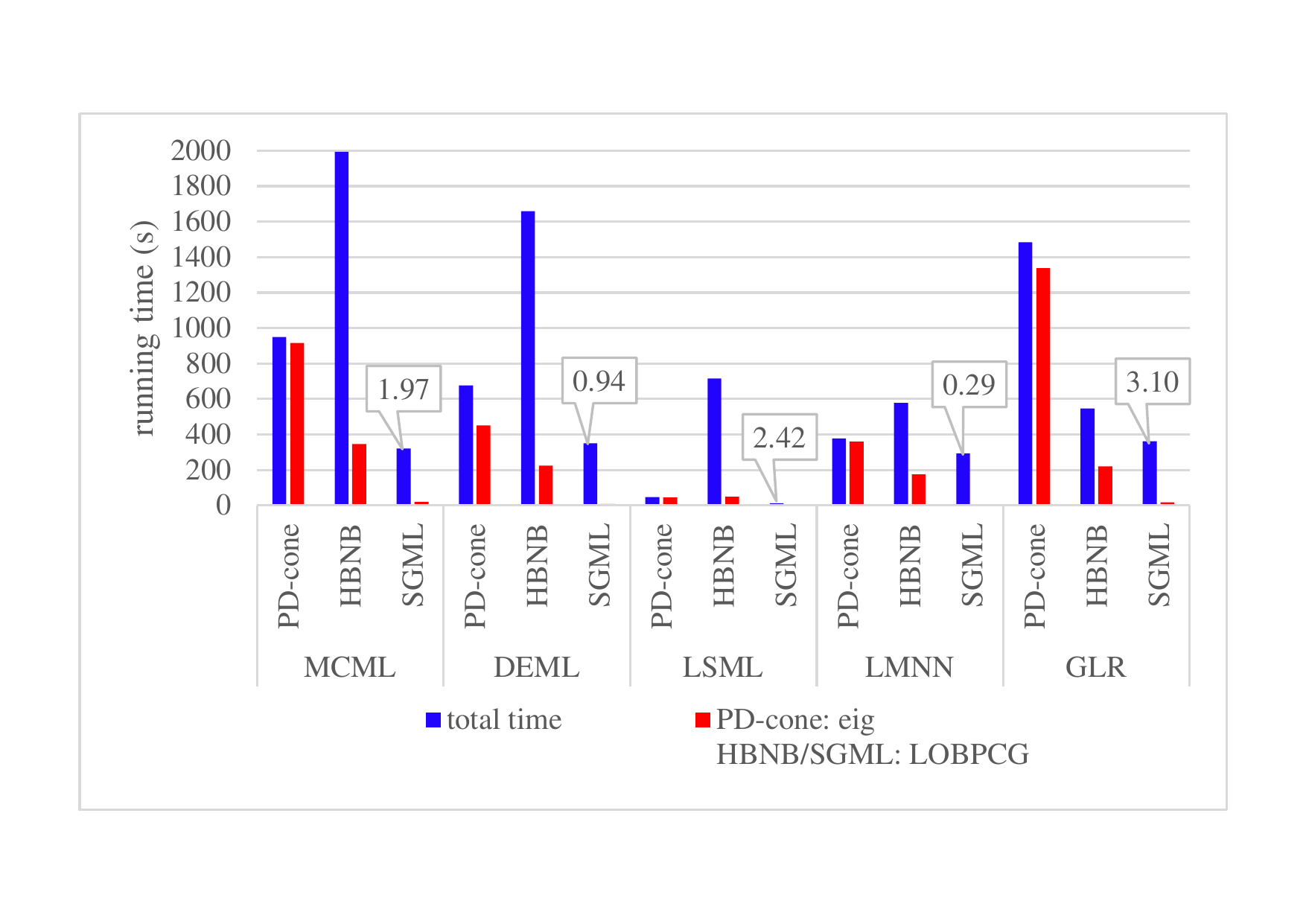}
\vspace{-0.3in}
\caption{Running time on \textit{Colon-cancer} of optimization methods PD-cone, HBNB, and SGML on objective functions MCML, DEML, LSML, LMNN and GLR.
Labelled numbers denote the speed gain (faster (positive) or slower (negative)) of SGML over PD-cone.}
\label{fig:coloncancer_time}
\end{center}
\end{figure}

\begin{table*}[]
\begin{center}
\caption{Classification accuracy (\%) with a 10-nearest neighbor classifier. Best classification accuracy in bold (excluding PD-cone). Experiments are performed by creating 10 instances of 90\% training---10\% test split with random seeds 0-9, \textit{i.e.}, 10-fold random cross validation \cite{10.5555/1671238}.}
\label{tab:binaryclassification}
\begin{scriptsize}
\begin{tabular}{cccccccccc}
\hline
\multirow{2}{*}{$Q(\M)$} & \multirow{2}{*}{\begin{tabular}[c]{@{}c@{}}dataset\\ $(N,K)$\end{tabular}} & \multirow{2}{*}{\begin{tabular}[c]{@{}c@{}}Australian\\ (690,14)\end{tabular}} & \multirow{2}{*}{\begin{tabular}[c]{@{}c@{}}Breastcancer\\ (683,10)\end{tabular}} & \multirow{2}{*}{\begin{tabular}[c]{@{}c@{}}Diabetes\\ (768,8)\end{tabular}} & \multirow{2}{*}{\begin{tabular}[c]{@{}c@{}}Fourclass\\ (862,2)\end{tabular}} & \multirow{2}{*}{\begin{tabular}[c]{@{}c@{}}German\\ (1000,24)\end{tabular}} & \multirow{2}{*}{\begin{tabular}[c]{@{}c@{}}Haberman\\ (206,3)\end{tabular}} & \multirow{2}{*}{\begin{tabular}[c]{@{}c@{}}Heart\\ (270,13)\end{tabular}} & \multirow{2}{*}{\begin{tabular}[c]{@{}c@{}}ILPD\\ (583,10)\end{tabular}} \\
 &  &  &  &  &  &  &  &  &  \\ \hline
\multirow{4}{*}{MCML} & PD-cone & 87.37 & 97.23 & 75.80 & 78.43 & 73.00 & 71.97 & 86.67 & 71.74 \\
 & HBNB & 86.50 & 97.08 & \textbf{76.05} & \textbf{78.43} & \textbf{73.70} & 72.65 & \textbf{86.67} & 62.96 \\
  & PGML & 86.80 & 97.52 & 75.92 & 78.20 & 70.90 & \textbf{75.92} & 84.07 & \textbf{64.50} \\
 & SGML & \textbf{87.09} & \textbf{97.66} & 75.40 & 78.20 & 71.60 & \textbf{75.92} & 84.44 & \textbf{64.50} \\
 \hline
\multirow{4}{*}{DEML} & PD-cone & 86.64 & 96.93 & 76.72 & 78.20 & 70.50 & 76.25 & 85.19 & 62.94 \\
 & HBNB & 85.63 & \textbf{97.23} & \textbf{76.32} & \textbf{78.43} & \textbf{73.30} & 73.30 & \textbf{85.93} & \textbf{66.06} \\
  & PGML & \textbf{86.79} & 95.77 & 75.80 & \textbf{78.43} & 69.70 & 73.65 & 82.22 & 64.12 \\
 & SGML & 84.76 & 96.35 & 72.91 & \textbf{78.43} & 72.10 & \textbf{74.63} & 84.07 & 64.84 \\
 \hline
\multirow{4}{*}{LMNN} & PD-cone & 86.35 & 97.08 & 76.19 & 78.32 & 71.90 & 75.60 & 86.30 & 64.33 \\
 & HBNB & 84.91 & 97.22 & \textbf{76.45} & 78.43 & 70.60 & \textbf{76.25} & 84.81 & 64.48 \\
 & PGML & 85.78 & \textbf{97.37} & 76.19 & \textbf{78.55} & \textbf{71.40} & \textbf{76.25} & 85.56 & 64.15 \\
 & SGML & \textbf{86.36} & \textbf{97.37} & 75.53 & 78.43 & 70.80 & 76.24 & \textbf{85.93} & \textbf{64.49} \\
  \hline
\multirow{4}{*}{GLR} & PD-cone & 87.22 & 97.51 & 76.45 & 78.20 & 71.50 & 75.92 & 85.56 & 62.09 \\
 & HBNB & 85.34 & \textbf{97.66} & 76.06 & 78.32 & 71.40 & 76.25 & \textbf{84.81} & 65.69 \\
 & PGML & \textbf{86.50} & 96.50 & 75.53 & 78.20 & 71.40 & 76.25 & 83.33 & 64.68 \\
 & SGML & 86.36 & 97.23 & \textbf{77.49} & \textbf{78.43} & \textbf{71.80} & \textbf{77.55} & 82.96 & \textbf{66.08} \\
  \hline
\end{tabular}
\end{scriptsize}
\end{center}
\end{table*}

\begin{table*}[]
\begin{center}
\caption{Continuation of Table \ref{tab:binaryclassification} on other datasets. 
Best classification accuracy in bold (excluding PD-cone).
Experiments are performed by creating 10 instances of 90\% training---10\% test split with random seeds 0-9, \textit{i.e.}, 10-fold random cross validation \cite{10.5555/1671238}.
The last column shows the avg. classification accuracy of 15 experimented datasets.}
\label{tab:binaryclassification_c}
\begin{scriptsize}
\begin{tabular}{cccccccccc}
\hline
\multirow{2}{*}{$Q(\M)$} & \multirow{2}{*}{\begin{tabular}[c]{@{}c@{}}dataset\\ $(N,K)$\end{tabular}} & \multirow{2}{*}{\begin{tabular}[c]{@{}c@{}}Liverdisorders\\ (345,6)\end{tabular}} & \multirow{2}{*}{\begin{tabular}[c]{@{}c@{}}Monk1\\ (556,6)\end{tabular}} & \multirow{2}{*}{\begin{tabular}[c]{@{}c@{}}Pima\\ (768,8)\end{tabular}} & \multirow{2}{*}{\begin{tabular}[c]{@{}c@{}}Planning\\ (182,12)\end{tabular}} & \multirow{2}{*}{\begin{tabular}[c]{@{}c@{}}Voting\\ (435,16)\end{tabular}} & \multirow{2}{*}{\begin{tabular}[c]{@{}c@{}}WDBC\\ (569,30)\end{tabular}} & \multirow{2}{*}{\begin{tabular}[c]{@{}c@{}}Sonar\\ (208,60)\end{tabular}} & \multirow{2}{*}{avg.} \\
 &  &  &  &  &  &  &  &  &  \\ \hline
\multirow{4}{*}{MCML} & PD-cone & 69.05 & 86.52 & 75.78 & 64.47 & 97.46 & 96.65 & 74.70 & 80.46 \\
 & HBNB & \textbf{67.67} & \textbf{81.67} & 74.74 & \textbf{63.42} & 96.08 & \textbf{95.95} & 75.15 & 79.25 \\
  & PGML & 63.10 & 78.61 & \textbf{76.05} & 62.95 & \textbf{96.78} & 95.77 & 74.13 & 78.75 \\
 & SGML & 63.10 & 79.34 & 75.79 & 62.95 & 95.86 & \textbf{95.95} & \textbf{76.08} & 78.93 \\
 \hline
\multirow{4}{*}{DEML} & PD-cone & 65.76 & 77.89 & 76.69 & 62.98 & 95.37 & 96.12 & 76.53 & 78.98 \\
 & HBNB & 63.62 & \textbf{78.08} & \textbf{77.08} & 61.93 & 94.93 & 95.59 & \textbf{77.03} & 78.96 \\
 & PGML & \textbf{65.71} & 77.71 & 74.48 & 63.45 & \textbf{95.16} & 94.18 & 72.82 & 78.00 \\
 & SGML & \textbf{65.71} & 76.08 & 72.14 & \textbf{67.25} & 92.62 & \textbf{95.94} & 74.29 & 78.14 \\
  \hline
\multirow{4}{*}{LMNN} & PD-cone & 63.10 & 82.21 & 77.47 & 62.95 & 95.61 & 97.00 & 83.68 & 79.87 \\
 & HBNB & \textbf{63.67} & \textbf{84.90} & \textbf{77.73} & 64.06 & \textbf{94.92} & 95.94 & 75.65 & 79.34 \\
 & PGML & 61.67 & 79.87 & 76.18 & 62.37 & 93.30 & 95.94 & 77.01 & 78.77 \\
 & SGML & \textbf{63.67} & 78.25 & 76.70 & \textbf{64.59} & 92.83 & \textbf{96.65} & \textbf{77.51} & 79.02 \\
  \hline
\multirow{4}{*}{GLR} & PD-cone & 63.05 & 80.04 & 77.99 & 61.43 & 95.16 & 96.12 & 79.44 & 79.18 \\
 & HBNB & 63.71 & \textbf{81.49} & \textbf{76.17} & 60.85 & 95.61 & \textbf{96.30} & \textbf{76.58} & 79.08 \\
  & PGML & 63.62 & 80.58 & 74.88 & 62.37 & \textbf{96.77} & 95.94 & 73.77 & 78.69 \\
 & SGML & \textbf{64.38} & 80.04 & 74.48 & \textbf{67.31} & 96.09 & 95.41 & 75.15 & 79.38 \\
 \hline
\end{tabular} 
\end{scriptsize}
\end{center}
\end{table*}

\section{Conclusion}
\label{sec:conclude}
A fundamental challenge in metric learning is to efficiently handle the constraint of metric matrix $\M$ inside the positive definite (PD) cone. 
Circumventing full eigen-decomposition, we propose a fast, general optimization framework capable of minimizing any convex and differentiable objective $Q(\M)$.
The theoretical foundation is Gershgorin disc perfect alignment (GDPA): all Gershgorin disc left-ends of a generalized graph Laplacian matrix corresponding to an irreducible, balanced signed graph can be perfectly aligned via a similarity transform. 
This enables us to write tightest possible linear constraints per iteration to replace the PD cone constraint, and to solve the optimization as a sequence of linear programs via the Frank-Wolfe method.
We envision that GDPA can also be used in other optimization problems with PD / PSD cone constraints, such as semi-definite programs (SDP). 

\appendix
\section{Appendix}
\subsection*{Proof of Unit Spectral Radius}

We prove that matrix $\A = \D^{-1}(\W_g + \lambda_{\min} \I)$ defined in Lemma\;1 has unit spectral radius, \ie, $\rho(\A) = 1$.
To show this, suppose the contrary and $\A$ has eigenvalue $\theta > 1$, with corresponding unit-norm right eigenvector $\u$, \ie, 
\begin{align}
\theta \u &= \D^{-1} (\W_g + \lambda_{\min} \I) \u \nonumber \\
\left( \theta \D - \W_g \right) \u &=  \lambda_{\min} \u.
\nonumber
\end{align}
Thus, $\u$ is also an eigenvector for symmetric matrix $\theta \D - \W_g$ corresponding to eigenvalue $\lambda_{\min}$.
We can then write the Rayleigh quotient \cite{ma2012} for eigen-pair $(\lambda_{\min}, \u)$:
\begin{align}
\frac{\u^{\top} \left( \theta \D - \W_g \right) \u}{\u^{\top} \u} &= \lambda_{\min}
\nonumber \\
\theta \u^{\top} \D \u - \u^{\top} \W_g \u &= \lambda_{\min}
\nonumber
\end{align}
where $\u$ has unit-norm. 
Since $\theta > 1$ by assumption and $\D$ is PD, we can write
\begin{align}
\lambda_{\min} =
\theta \u^{\top} \D \u - \u^{\top} \W_g \u &> \u^{\top} \D \u - \u^{\top} \W_g \u
\nonumber \\
&= \frac{\u^{\top} \left( \D - \W_g \right) \u}{\u^{\top} \u}.
\nonumber
\end{align}
Thus $\u$ achieves a smaller Rayleigh quotient for matrix $\D - \W_g$ than $\lambda_{\min}$ using first eigenvector $\v$. 
This is a contradiction, and hence $\theta > 1$ does not exist.

Suppose now $\theta < -1$. Starting again from the Rayleigh quotient for eigen-pair $(\lambda_{\min}, \u)$ of matrix $\theta \D - \W_g$:
\begin{align}
0 &\leq \lambda_{\min} = \theta \u^{\top} \D \u - \u^{\top} \W_g \u
\nonumber \\
\u^{\top} \W_g \u &\leq \theta \u^{\top} \D \u 
~< \; -\u^{\top} \D \u
\nonumber
\end{align}
where the strict inequality is true since $\theta < -1$ by assumption. 
Define (self-loop-free) degree matrix $\D_g = \text{diag}(\W_g \1)$ and graph Laplacian $\L_g = \D_g - \W_g$ that correspond to adjacency matrix $\W_g$.
We can write 
\begin{align}
\u^{\top} \W_g \u &= \u^{\top} \D_g \u - \u^{\top} \L_g \u.
\nonumber 
\end{align}
The strict inequality can be rewritten as:
\begin{align}
\u^{\top} \L_g \u - \u^{\top} \D_g \u &> \u^{\top} \D \u
\nonumber \\
\u^{\top} \L_g \u &> \u^{\top} (\D + \D_g) \u 
\geq 2 \u^{\top} \D_g \u 
\nonumber \\
\sum_{(i,j) \in \cE} w_{ij} (u_i - u_j)^2 &> 2 \sum_i \left( \sum_j w_{ij} \right) u_i^2 
\nonumber \\
&= 2 \sum_{(i,j) \in \cE} w_{ij} (u_i^2 + u_j^2)
\nonumber \\
\sum_{(i,j)} w_{ij} (-2 u_i u_j) &> \sum_{(i,j) \in \cE} w_{ij} (u_i^2 + u_j^2).
\nonumber 
\end{align}
From $0 \leq (u_i + u_j)^2$, we know $-2 u_i u_j \leq u_i^2 + u_j^2$.
Thus the last equation is a contradiction, and $\theta < -1$ is also not possible.
Thus $\rho(\A) = 1$. $\Box$


\bibliographystyle{IEEEtran}
\bibliography{07_ref}

\begin{IEEEbiography}[{\includegraphics[width=1in,height=1.25in,trim={0.5in 0in 0.5in 0in},clip,keepaspectratio]{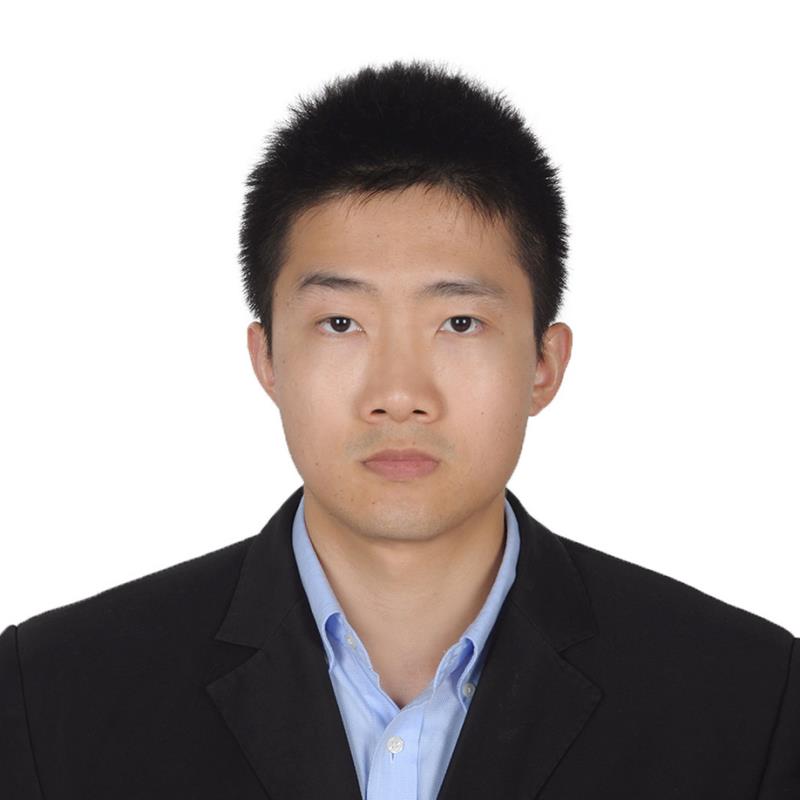}}]{Cheng Yang}
(A’11–S’12–M’14) received the B.Eng. and Ph.D. degrees in electronic and electrical engineering from the University of Strathclyde, Glasgow, U.K., in 2011 and 2017, respectively. 
He is a Postdoc at Shanghai Jiao Tong University, Shanghai, China.
He was a Postdoc at York University, Toronto,
Canada 2019-2020 and a Project Researcher at National
Institute of Informatics, Tokyo, Japan 2017-2018. 
His research interests include graph signal processing and multimedia systems.
\end{IEEEbiography}

\begin{IEEEbiography}[{\includegraphics[width=1in,height=1.25in,trim={0in 0in 0in 0in},clip,keepaspectratio]{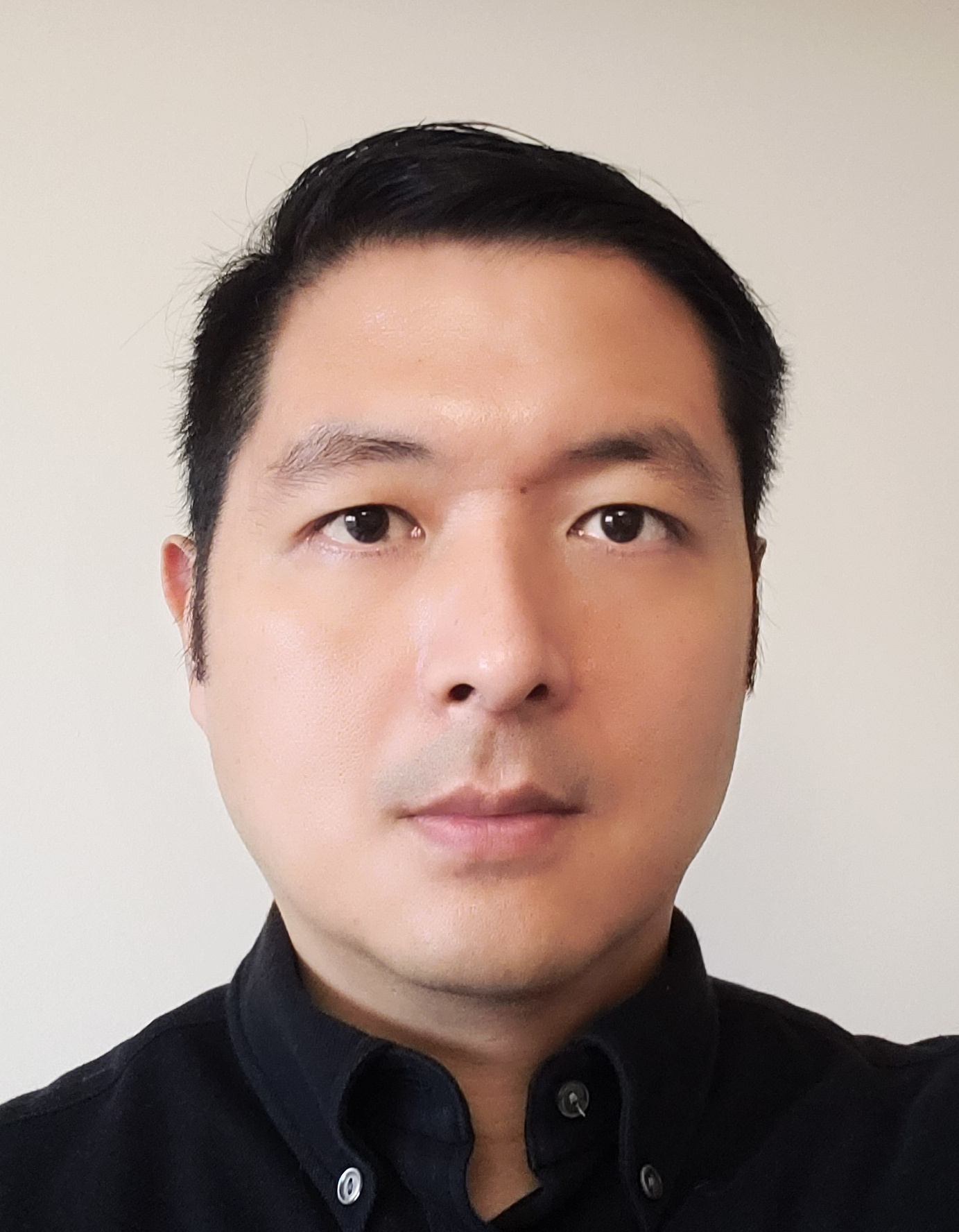}}]{Gene Cheung} (M'00--SM'07--F'21)
received the B.S. degree in electrical engineering from Cornell University in 1995, and the M.S. and Ph.D. degrees in electrical engineering and computer science from the University of California, Berkeley, in 1998 and 2000, respectively. 

He was a senior researcher in Hewlett-Packard Laboratories Japan, Tokyo, from 2000 till 2009. 
He was an assistant then associate professor in National Institute of Informatics (NII) in Tokyo, Japan, from 2009 till 2018. 
He is now an associate professor in York University, Toronto, Canada.

His research interests include 3D imaging and graph signal processing. 
He has served as associate editor for multiple journals, including IEEE Transactions on Multimedia (2007--2011), IEEE Transactions on Circuits and Systems for Video Technology (2016--2017) and IEEE Transactions on Image Processing (2015--2019).
He is a co-author of several paper awards, including the best student paper award in ICIP 2013, ICIP 2017 and IVMSP 2016, best paper runner-up award in ICME 2012, and IEEE Signal Processing Society (SPS) Japan best paper award 2016.
He is a recipient of the Canadian NSERC Discovery Accelerator Supplement (DAS) 2019.
He is a fellow of IEEE.
\end{IEEEbiography}

\begin{IEEEbiography}[{\includegraphics[width=1in,height=1.25in,trim={0in 0in 0in 0in},clip,keepaspectratio]{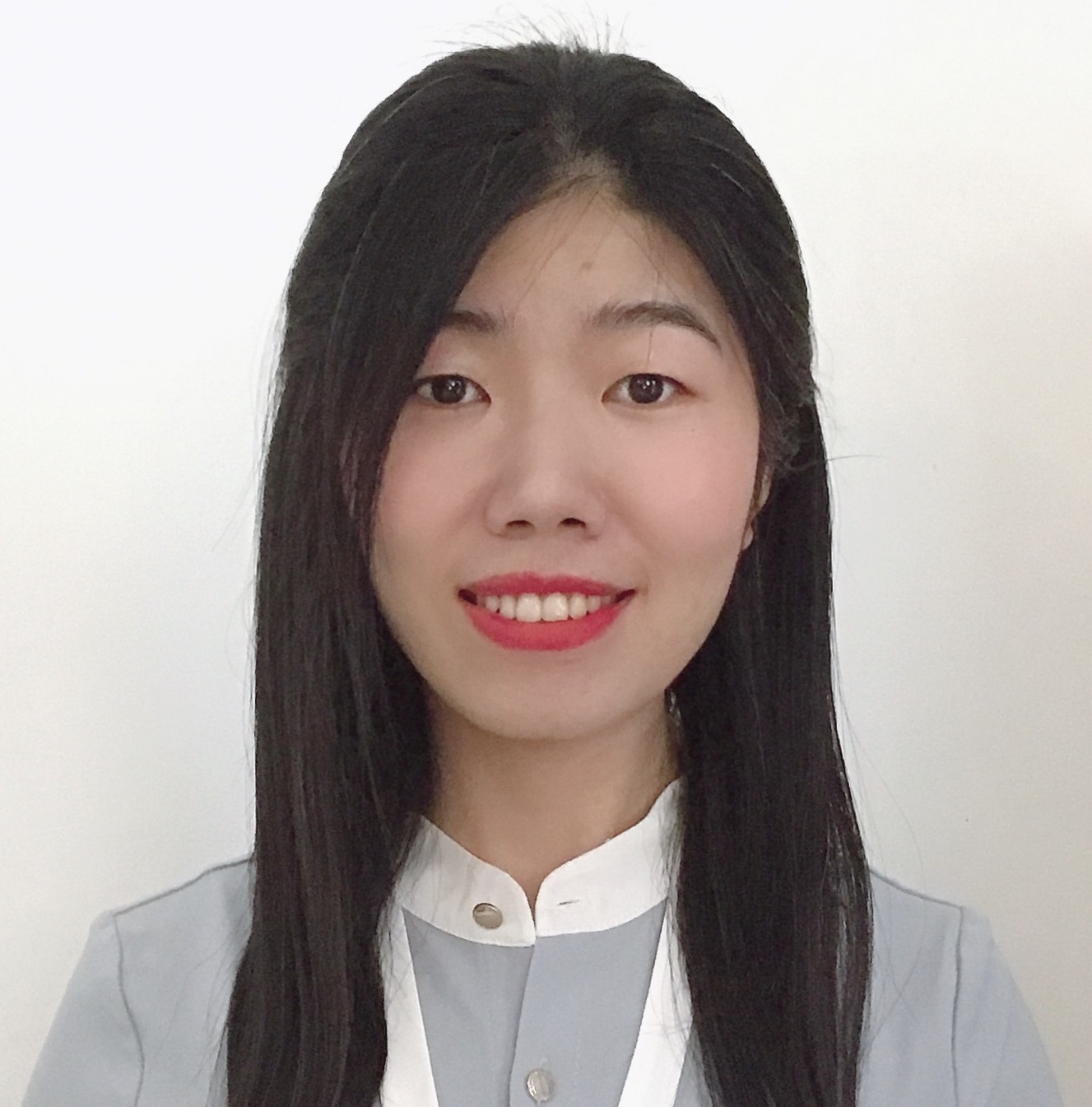}}]{Wei Hu}
(Senior Member, IEEE) received the B.S. degree in Electrical Engineering from the University of Science and Technology of China in 2010, and the Ph.D. degree in Electronic and Computer Engineering from the Hong Kong University of Science and Technology in 2015.
She was a Researcher with Technicolor, Rennes, France, from 2015 to 2017. She is currently an Assistant Professor with Wangxuan Institute of Computer Technology, Peking University. Her research interests are graph signal processing, graph-based machine learning and 3D visual computing. She has authored around 50 international journal and conference publications, with several paper awards including the best student paper runner up award in ICME 2020. She is a member in IEEE MSA-TC (2020-2024), and serves as associate editor for IEEE Transactions on Signal and Information Processing over Networks and Frontiers in Signal Processing. 
\end{IEEEbiography}


\end{document}